  \providecommand\BibTeX{{%
    \normalfont B\kern-0.5em{\scshape i\kern-0.25em b}\kern-0.8em\TeX}}}
\newtheorem{definition}{Definition}
\newtheorem{theorem}{Theorem}
\begin{document}

\title{Structural Deep Clustering Network}


 \author{Deyu Bo}
 \authornote{Both authors contributed equally to this research.}
 \affiliation{
   \institution{Beijing University of Posts and \\ Telecommunications}
   \city{Beijing}
   \country{China}
   }
 \email{bodeyu@bupt.edu.cn}

 \author{Xiao Wang}
 \authornotemark[1]
 \affiliation{
   \institution{Beijing University of Posts and \\ Telecommunications}
   \city{Beijing}
   \country{China}
   }
 \email{xiaowang@bupt.edu.cn}
 
 \author{Chuan Shi}
 \authornote{Corresponding author}
 \affiliation{
   \institution{Beijing University of Posts and \\ Telecommunications}
   \city{Beijing}
   \country{China}
   }
 \email{shichuan@bupt.edu.cn}

 \author{Meiqi Zhu}
 \affiliation{
   \institution{Beijing University of Posts and \\ Telecommunications}
   \city{Beijing}
   \country{China}
   }
 \email{zhumeiqi@bupt.edu.cn}

 \author{Emiao Lu}
 \affiliation{
   \institution{Tencent}
   \city{Shenzhen}
   \country{China}
   }
 \email{emiao.lu@gmail.com}
 
 \author{Peng Cui}
 \affiliation{
   \institution{Tsinghua University}
   \city{Beijing}
   \country{China}
   }
 \email{cuip@tsinghua.edu.cn}


\begin{abstract}
Clustering is a fundamental task in data analysis. Recently, deep clustering, which derives inspiration primarily from deep learning approaches, achieves state-of-the-art performance and has attracted considerable attention. Current deep clustering methods usually boost the clustering results by means of the powerful representation ability of deep learning, e.g., autoencoder, suggesting that learning an effective representation for clustering is a crucial requirement. The strength of deep clustering methods is to extract the useful representations from the data itself, rather than the structure of data, which receives scarce attention in representation learning. Motivated by the great success of Graph Convolutional Network (GCN) in encoding the graph structure, we propose a Structural Deep Clustering Network (SDCN) to integrate the structural information into deep clustering. Specifically, we design a delivery operator to transfer the representations learned by autoencoder to the corresponding GCN layer, and a dual self-supervised mechanism to unify these two different deep neural architectures and guide the update of the whole model. In this way, the multiple structures of data, from low-order to high-order, are naturally combined with the multiple representations learned by autoencoder. Furthermore, we theoretically analyze the delivery operator, i.e., with the delivery operator, GCN improves the autoencoder-specific representation as a high-order graph regularization constraint and autoencoder helps alleviate the over-smoothing problem in GCN. Through comprehensive experiments, we demonstrate that our propose model can consistently perform better over the state-of-the-art techniques.
\end{abstract}

%

\keywords{deep clustering, graph convolutional network, neural network, self-supervised learning}

\maketitle

\section{Introduction}
Clustering, one of the most fundamental data analysis tasks, is to group similar samples into the same category \cite{hartigan1979algorithm,ng2002spectral}. Over the past decades, a large family of clustering algorithms has been developed and successfully applied to various real-world applications, such as image clustering \cite{yang2010image} and text clustering \cite{aggarwal2012survey}. Recently, the breakthroughs in deep learning have led to a paradigm shift in artificial intelligence and machine learning, achieving great success on many important tasks, including clustering. Therefore, the deep clustering has caught significant attention \cite{hatcher2018survey}. The basic idea of deep clustering is to integrate the objective of clustering into the powerful representation ability of deep learning. Hence learning an effective data representation is a crucial prerequisite for deep clustering. For example, \cite{yang2017towards} uses the representation learned by autoencoder in $K$-means; \cite{xie2016unsupervised, guo2017improved} leverage a clustering loss to help autoencoder learn the data representation with high cluster cohesion \cite{rousseeuw1987silhouettes}, and \cite{jiang2016variational} uses a variational autoencoder to learn better data representation for clustering. To date, deep clustering methods have achieved state-of-the-art performance and become the de facto clustering methods.

Despite the success of deep clustering, they usually focus on the characteristic of data itself, and thus seldom take the structure of data into account when learning the representation. Notably, the importance of considering the relationship among data samples has been well recognized by previous literatures and results in data representation field. Such structure reveals the latent similarity among samples, and therefore provides a valuable guide on learning the representation. One typical method is the spectral clustering \cite{ng2002spectral}, which treats the samples as the nodes in weighted graph and uses graph structure of data for clustering. Recently, the emerging Graph Convolutional Networks (GCN) \cite{kipf2016semi} also encode both of the graph structure and node attributes for node representation. In summary, the structural information plays a crucial role in data representation learning. However, it has seldom been applied for deep clustering.

In reality, integrating structural information into deep clustering usually needs to address the following two problems.
(1) \textit{What structural information should be considered in deep clustering?} It is well known that the structural information indicates the underlying similarity among data samples. However, the structure of data is usually very complex, i.e., there is not only the direct relationship between samples (also known as first-order structure), but also the high-order structure. The high-order structure imposes the similarity constraint from more than one-hop relationship between samples. Taking the second-order structure as an example, it implies that for two samples with no direct relationship, if they have many common neighbor samples, they should still have similar representations. When the structure of data is sparse, which always holds in practice, the high-order structure is of particular importance. Therefore, only utilizing the low-order structure in deep clustering is far from sufficient, and how to effectively consider higher-order structure is the first problem;
(2) \textit{What is the relation between the structural information and deep clustering?} The basic component of deep clustering is the Deep Neural Network (DNN), e.g. autoencoder. The network architecture of autoencoder is very complex, consisting of multiple layers. Each layer captures different latent information. And there are also various types of structural information between data. Therefore, what is the relation between different structures and different layers in autoencoder? One can use the structure to regularize the representation learned by the autoencoder in some way, however, on the other hand, one can also directly learn the representation from the structure itself. How to elegantly combine the structure of data with the autoencoder structure is another problem.

In order to capture the structural information, we first construct a $K$-Nearest Neighbor (KNN) graph, which is able to reveal the underlying structure of the data\cite{LiuXTZ19, LiuT17}. To capture the low-order and high-order structural information from the KNN graph, we propose a GCN module, consisting of multiple graph convolutional layers, to learn the GCN-specific representation.


In order to introduce structural information into deep clustering, we introduce an autoencoder module to learn the autoencoder-specific representation from the raw data, and propose a delivery operator to combine it with the GCN-specific representation. We theoretically prove that the delivery operator is able to assist the integration between autoencoder and GCN better. In particular, we prove that GCN provides an approximate second-order graph regularization for the representation learned by autoencoder, and the representation learned by autoencoder can alleviate the over-smoothing issue in GCN.

Finally, because both of the autoencoder and GCN modules will output the representations, we propose a dual self-supervised module to uniformly guide these two modules. Through the dual self-supervised module, the whole model can be trained in an end-to-end manner for clustering task. 

In summary, we highlight the main contributions as follows:

\begin{itemize}
\item We propose a novel Structural Deep Clustering Network (SDCN) for deep clustering. The proposed SDCN effectively combines the strengths of both autoencoder and GCN with a novel delivery operator and a dual self-supervised module. To the best of our knowledge, this is the first time to apply structural information into deep clustering explicitly.

\item We give a theoretical analysis of our proposed SDCN and prove that GCN provides an approximate second-order graph regularization for the DNN representations and 
the data representation learned in SDCN is equivalent to the sum of the representations with different-order structural information. Based on our theoretical analysis, the over-smoothing issue of GCN module in SDCN will be effectively alleviated.

\item Extensive experiments on six real-world datasets demonstrate the superiority of SDCN in comparison with the state-of-the-art techniques. Specifically, SDCN achieves significant improvements (17\% on NMI, 28\% on ARI) over the best baseline method.
\end{itemize}

\section{RELATED WORK}
In this section, we introduce the most related work: deep clustering and graph clustering with GCN.

Deep clustering methods aim to combine the deep representation learning with the clustering objective. For example, 
\cite{yang2017towards} proposes deep clustering network, using the loss function of $K$-means to help autoencoder learn a "K-means-friendly" data representation. 
Deep embedding clustering \cite{xie2016unsupervised} designs a KL-divergence loss to make the representation learned by autoencoder surround the cluster centers closer, thus improving the cluster cohesion.
Improved deep embedding clustering \cite{guo2017improved} adds a reconstruction loss to the objective of DEC as a constraint to help autoencoder learn a better data representation.
Variational deep embedding \cite{jiang2016variational} is able to model the data generation process and clusters jointly by using a deep variational autoencoder, so as to achieve better clustering results.
\cite{ji2017deep} proposes deep subspace clustering networks, which uses a novel self-expressive layer between the encoder and the decoder. It is able to mimic the "self-expressiveness" property in subspace clustering, thus obtaining a more expressive representation.
DeepCluster \cite{caron2018deep} treats the clustering results as pseudo labels so that it can be applied in training deep neural network with large datasets.
However, all of these methods only focus on learning the representation of data from the samples themselves. Another important information in learning representation, the structure of data, is largely ignored by these methods.

To cope with the structural information underlying the data, some GCN-based clustering methods have been widely applied. For instance,
\cite{kipf2016variational} proposes graph autoencoder and graph variation autoencoder, which uses GCN as an encoder to integrate graph structure into node features to learn the nodes embedding.
Deep attentional embedded graph clustering \cite{wang2019attributed} uses an attention network to capture the importance of the neighboring nodes and employs the KL-divergence loss from DEC to supervise the training process of graph clustering. 
All GCN-based clustering methods mentioned above rely on reconstructing the adjacency matrix to update the model, and those methods can only learn data representations from the graph structure, which ignores the characteristic of the data itself. However, the performance of this type of methods might be limited to the overlapping between community structure.

\section{THE PROPOSED MODEL}
In this section, we introduce our proposed structural deep clustering network, where the overall framework is shown in Figure 1. We first construct a KNN graph based on the raw data. Then we input the raw data and KNN graph into autoencoder and GCN, respectively. We connect each layer of autoencoder with the corresponding layer of GCN, so that we can integrate the autoencoder-specific representation into structure-aware representation by a delivery operator. Meanwhile, we propose a dual self-supervised mechanism to supervise the training progress of autoencoder and GCN. We will describe our proposed model in detail in the following.

\begin{figure*}
\centering
\includegraphics[width=0.75\textwidth]{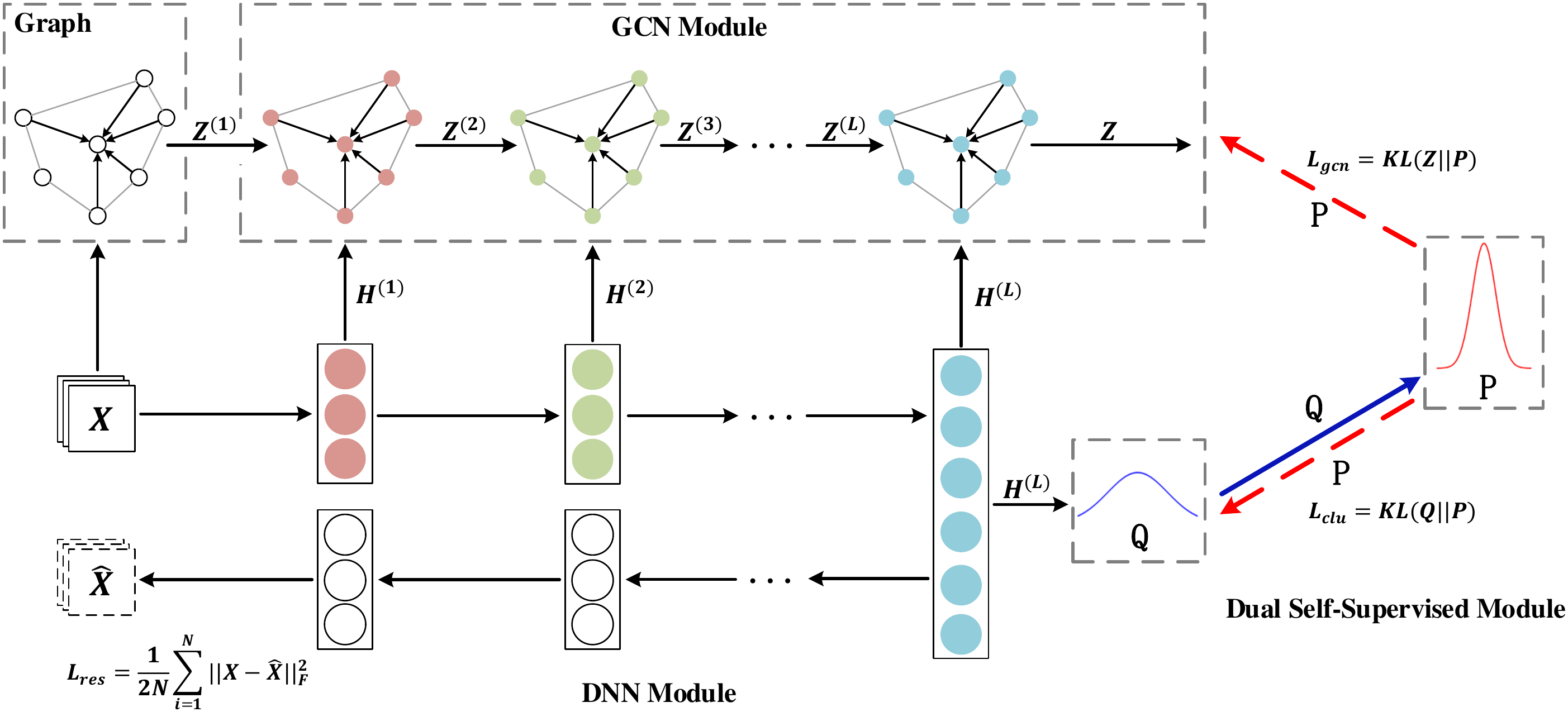}
\caption{The framework of our proposed SDCN. $\mathbf{X}$, $\mathbf{\hat{X}}$ are the input data and the reconstructed data, respectively. $\mathbf{H}^{(\ell)}$ and $\mathbf{Z}^{(\ell)}$ are the representations in the $\ell$-th layer in the DNN and GCN module, respectively. Different colors represent different representations $\mathbf{H}^{(\ell)}$, learned the by DNN module. The blue solid line represents that target distribution $\mathbf{P}$ is calculated by the distribution $\mathbf{Q}$ and the two red dotted lines represent the dual self-supervised mechanism. The target distribution $\mathbf{P}$ to guide the update of the DNN module and the GCN module at the same time.}
\label{model}
\end{figure*}

\subsection{KNN Graph}
Assume that we have the raw data $\mathbf{X}\in \mathbb{R}^{N\times d}$, where each row $\mathbf{x}_i$ represents the $i$-th sample, and $N$ is
the number of samples and $\textit{d}$ is the dimension. For each sample, we first find its top-$K$ similar neighbors and set edges to connect it with its neighbors.
There are many ways to calculate the similarity matrix $\mathbf{S}\in\mathbb{R}^{N\times N}$ of the samples. Here we list two popular approaches we used in constructing the KNN graph:

\begin{enumerate}
\item[1)] \textbf{Heat Kernel}. The similarity between samples $\textit{i}$ and $\textit{j}$ is calculated by:
\begin{align}
\mathbf{S}_{ij}=e^{-\frac{\left \| \mathbf{x}_{i}-\mathbf{x}_{j} \right \|^{2}}{t}},
\end{align}
where \textit{t} is the time parameter in heat conduction equation. For continuous data, e.g., images.
\item[2)] \textbf{Dot-product}.The similarity between samples $\textit{i}$ and $\textit{j}$ is calculated by:
\begin{align}
\mathbf{S}_{ij}=\mathbf{x}^{T}_{j}\mathbf{x}_{i}.
\end{align}
For discrete data, e.g., bag-of-words, we use the dot-product similarity so that the similarity is related to the number of identical words only.
\end{enumerate}
After calculating the similarity matrix $\mathbf{S}$, we select the top-$K$ similarity points of each sample as its neighbors to construct an undirected $K$-nearest neighbor graph. In this way, we can get the adjacency matrix $\mathbf{A}$ from the non-graph data.

\subsection{DNN Module}
As we mentioned before, learning an effective data representation is of great importance to deep clustering.
There are several alternative unsupervised methods for different types of data to learn representations. For example, denoising autoencoder \cite{vincent2008extracting}, convolutional autoencoder \cite{masci2011stacked}, LSTM encoder-decoder \cite{malhotra2016lstm} and adversarial autoencoder \cite{makhzani2015adversarial}. They are variations of the basic autoencoder \cite{hinton2006reducing}. In this paper, for the sake of generality, we employ the basic autoencoder to learn the representations of the raw data in order to accommodate for different kinds of data characteristics. We assume that there are $L$ layers in the autoencoder and $\ell$ represents the layer number. Specifically, the representation learned by the $\ell$-th layer in encoder part, $\mathbf{H}^{(\ell)}$, can be obtained as follows:
\begin{align}
\mathbf{H}^{(\ell)}=\phi\left(\mathbf{W}^{(\ell)}_{e} \mathbf{H}^{(\ell-1)} + \mathbf{b}^{(\ell)}_{e}\right),
\end{align}
where $\phi$ is the activation function of the fully connected layers such as Relu \cite{nair2010rectified} or Sigmoid function,
$\mathbf{W}^{(\ell)}_{e}$ and $\mathbf{b}^{(\ell)}_{e}$ are the weight matrix and bias of the $\ell$-th layer in the encoder, respectively. Besides, we denote $\mathbf{H}^{(0)}$ as the raw data $\mathbf{X}$.

The encoder part is followed by the decoder part, which is to reconstruct the input data through several fully connected layers by the equation
\begin{align}
\mathbf{H}^{(\ell)}=\phi\left(\mathbf{W}^{(\ell)}_{d} \mathbf{H}^{(\ell-1)} + \mathbf{b}^{(\ell)}_{d}\right),
\end{align}
where $\mathbf{W}^{(\ell)}_{d}$ and $\mathbf{b}^{(\ell)}_{d}$ are the weight matrix and bias of the $\ell$-th layer in the decoder, respectively.

The output of the decoder part is the reconstruction of the raw data $\mathbf{\hat{X}}=\mathbf{H}^{(L)}$, which results in the following objective function:
\begin{align}
\mathcal{L}_{res}=\frac{1}{2N}\sum^{N}_{i=1} \left\| \mathbf{x}_{i}-\mathbf{\hat{x}}_{i} \right\|_{2}^{2}=\frac{1}{2N}||\mathbf{X}-\mathbf{\hat{X}}||_{F}^{2}.
\end{align}

\subsection{GCN Module}
Autoencoder is able to learn the useful representations from the data itself, e.g. $\mathbf{H}^{(1)}, \mathbf{H}^{(2)},\cdots,\mathbf{H}^{(L)}$, while ignoring the relationship between samples. In the section, we will introduce how to use the GCN module to propagate these representations generated by the DNN module. Once all the representations learned by DNN module are integrated into GCN, then the GCN-learnable representation will be able to accommodate for two different kinds of information, i.e., data itself and relationship between data. In particular, with the weight matrix $\mathbf{W}$, the representation learned by the $\ell$-th layer of GCN, $\mathbf{Z}^{(\ell)}$, can be obtained by the following convolutional operation:
\begin{align}
\mathbf{Z}^{(\ell)}=\phi(\mathbf{\widetilde{D}}^{-\frac{1}{2}} \mathbf{\widetilde{A}} \mathbf{\widetilde{D}}^{-\frac{1}{2}} \mathbf{Z}^{(\ell-1)} \mathbf{W}^{(\ell-1)}),
\label{gcn}
\end{align}
where $\mathbf{\widetilde{A}}=\mathbf{A}+\mathbf{I}$ and $\mathbf{\widetilde{D}}_{ii}=\sum_{j}\mathbf{\widetilde{A}_{ij}}$. $\mathbf{I}$ is the identity diagonal matrix of the adjacent matrix $\mathbf{A}$ for the self-loop in each node. As can be seen from Eq. \ref{gcn}, the representation $\mathbf{Z}^{(\ell-1)}$ will propagate through the normalized adjacency matrix $\mathbf{\widetilde{D}}^{-\frac{1}{2}} \mathbf{\widetilde{A}} \mathbf{\widetilde{D}}^{-\frac{1}{2}}$ to obtain the new representation $\mathbf{Z}^{(\ell)}$. Considering that the representation learned by autoencoder $\mathbf{H}^{(\ell-1)}$ is able to reconstruct the data itself and contain different valuable information, we combine the two representations $\mathbf{Z}^{(\ell-1)}$
and $\mathbf{H}^{(\ell-1)}$ together to get a more complete and powerful representation as follows:
\begin{align}
\mathbf{\widetilde{Z}}^{(\ell-1)}=(1-\epsilon) \mathbf{Z}^{(\ell-1)}+\epsilon \mathbf{H}^{(\ell-1)},
\label{add}
\end{align}
where $\epsilon$ is a balance coefficient, and we uniformly set it to 0.5 here.
In this way, we connect the autoencoder and GCN layer by layer.

Then we use $\mathbf{\widetilde{Z}}^{(\ell-1)}$ as the input of the $l$-th layer in GCN to generate the representation $\mathbf{Z}^{(\ell)}$:
\begin{align}
\mathbf{Z}^{(\ell)}=\phi\left(\mathbf{\widetilde{D}}^{-\frac{1}{2}} \mathbf{\widetilde{A}} \mathbf{\widetilde{D}}^{-\frac{1}{2}} \mathbf{\widetilde{Z}}^{(\ell-1)} \mathbf{W}^{(\ell-1)} \right).
\label{ll}
\end{align}
As we can see in Eq. \ref{ll}, the autoencoder-specific representation $\mathbf{H}^{(\ell-1)}$ will be propagated through the normailized adjacency matrix $\mathbf{\widetilde{D}}^{-\frac{1}{2}} \mathbf{\widetilde{A}} \mathbf{\widetilde{D}}^{-\frac{1}{2}}$. Because the representations learned by each DNN layer are different, to preserve information as much as possible, we transfer the representations learned from each DNN layer into a corresponding GCN layer for information propagation, as in Figure \ref{model}. The delivery operator works $L$ times in the whole model. We will theoretically analyze the advantages of this delivery operator in Section 3.5.

Note that, the input of the first layer GCN is the raw data $\mathbf{X}$:
\begin{align}
\mathbf{Z}^{(1)}=\phi(\mathbf{\widetilde{D}}^{-\frac{1}{2}} \mathbf{\widetilde{A}} \mathbf{\widetilde{D}}^{-\frac{1}{2}} \mathbf{X} \mathbf{W}^{(1)}).
\label{gcnfirst}
\end{align}

The last layer of the GCN module is a multiple classification layer with a softmax function:
\begin{align}
Z=softmax\left( \mathbf{\widetilde{D}}^{-\frac{1}{2}} \mathbf{\widetilde{A}} \mathbf{\widetilde{D}}^{-\frac{1}{2}} \mathbf{Z}^{(L)} \mathbf{W}^{(L)} \right).
\label{zij}
\end{align}
The result $z_{ij} \in Z$ indicates the probability sample $i$ belongs to cluster center $j$, and we can treat $Z$ as a probability distribution.

\subsection{Dual Self-Supervised Module}
Now, we have connected the autoencoder with GCN in the neural network architecture. However, they are not designed for the deep clustering. Basically, autoencoder is mainly used for data representation learning, which is an unsupervised learning scenario, while the traditional GCN is in the semi-supervised learning scenario. Both of them cannot be directly applied to the clustering problem. Here, we propose a dual self-supervised module, which unifies the autoencoder and GCN modules in a uniform framework and effectively trains the two modules end-to-end for clustering.

In particular, for the $i$-th sample and $j$-th cluster, we use the Student's t-distribution \cite{maaten2008visualizing} as a kernel to measure the similarity between the data representation $\mathbf{h}_{i}$ and the cluster center vector $\bm{\mu}_{j}$ as follows:
\begin{align}
q_{ij} = \frac{(1+\left\|\mathbf{h}_{i} - \bm{\mu}_{j}\right\|^{2}/v)^{-\frac{v+1}{2}}}{\sum_{j'}(1+\left\|\mathbf{h}_{i} - \bm{\mu}_{j'}\right\|^{2}/v)^{-\frac{v+1}{2}}},
\label{qij}
\end{align}
where $\mathbf{h}_{i}$ is the $i$-th row of $\mathbf{H}^{(L)}$, $\bm{\mu}_{j}$ is initialized by $K$-means on representations learned by pre-train autoencoder and $v$ are the degrees of freedom of the Student’s t-distribution. $q_{ij}$ can be considered as the probability of assigning sample $i$ to cluster $j$, i.e., a soft assignment. We treat $Q=[q_{ij}]$ as the distribution of the assignments of all samples and let $\alpha$=1 for all experiments.

After obtaining the clustering result distribution $Q$, we aim to optimize the data representation by learning from the high confidence assignments. Specifically, we want to make data representation closer to cluster centers, thus improving the cluster cohesion. Hence, we calculate a target distribution $P$ as follows:
\begin{align}
p_{ij}=\frac{q^{2}_{ij}/f_{j}}{\sum_{j'}q^{2}_{ij'}/f_{j'}},
\label{pij}
\end{align}
where $f_{j}=\sum_{i}q_{ij}$ are soft cluster frequencies. In the target distribution $P$, each assignment in $Q$ is squared and normalized so that the assignments will have higher confidence, leading to the following objective function:
\begin{align}
\mathcal{L}_{clu}=KL(P||Q)=\sum_{i}\sum_{j}p_{ij}log\frac{p_{ij}}{q_{ij}}.
\label{clu}
\end{align}
By minimizing the KL divergence loss between $Q$ and $P$ distributions, the target distribution $P$ can help the DNN module learn a better representation for clustering task, i.e., making the data representation surround the cluster centers closer. This is regarded as a self-supervised mechanism\footnotemark[1], because the target distribution $P$ is calculated by the distribution $Q$, and the $P$ distribution supervises the updating of the distribution $Q$ in turn.

\footnotetext[1]{Although some previous work tend to call this mechanism self-training, we prefer to use the term "self-supervised" to be consistent with the GCN training method.}

As for training the GCN module, one possible way is to treat the clustering assignments as the truth labels \cite{caron2018deep}. However, this strategy will bring noise and trivial solutions, and lead to the collapse of the whole model. As mentioned before, the GCN module will also provide a clustering assignment distribution $Z$. Therefore, we can use distribution $P$ to supervise distribution $Z$ as follows:
\begin{align}
\mathcal{L}_{gcn}=KL(P||Z)=\sum_{i}\sum_{j}p_{ij}log\frac{p_{ij}}{z_{ij}}.
\label{dual}
\end{align}
There are two advantages of the objective function: (1) compared with the traditional multi-classification loss function, KL divergence updates the entire model in a more "gentle" way to prevent the data representations from severe disturbances; (2) both GCN and DNN modules are unified in the same optimization target, making their results tend to be consistent in the training process.
Because the goal of the DNN module and GCN module is to approximate the target distribution $P$, which has a strong connection between the two modules, we call it a dual self-supervised mechanism.

\begin{algorithm}
    \caption{Training process of SDCN}
    \label{alg:SDCN}
    \KwIn{Input data: $X$, Graph: $\mathcal{G}$, Number of clusters: $K$, Maximum iterations: $MaxIter$\;}
    \KwOut{Clustering results $\mathbf{R}$\;}
    Initialize $\mathbf{W}^{(\ell)}_{e}, \mathbf{b}^{(\ell)}_{e}, \mathbf{W}^{(\ell)}_{d}, \mathbf{b}^{(\ell)}_{d}$ with pre-train autoencoder\;
    Initialize $\bm{\mu}$ with $K$-means on the representations learned by pre-train autoencoder\;
    Initialize $\mathbf{W}^{(\ell)}$ randomly\;
    \For{$iter \in 0,1,\cdots,MaxIter $}
    {
      Generate DNN representations $\mathbf{H}^{(1)}, \mathbf{H}^{(2)}, \cdots, \mathbf{H}^{(L)}$\;
      Use $\mathbf{H}^{(L)}$ to compute distribution $Q$ via Eq. \ref{qij}\;
      Calculate target distribution $P$ via Eq. \ref{pij}\;
      \For{$\ell \in 1,\cdots,L $}
      {
        Use the delivery operator with $\epsilon$=0.5 $\mathbf{\widetilde{Z}}^{(\ell)}=\frac{1}{2} \mathbf{Z}^{(\ell)} + \frac{1}{2}\mathbf{H}^{(\ell)}$\;
        Generate the next GCN layer representation
        $\mathbf{Z}^{(\ell+1)}=\phi\left(\mathbf{\widetilde{D}}^{-\frac{1}{2}} \mathbf{\widetilde{A}} \mathbf{\widetilde{D}}^{-\frac{1}{2}} \mathbf{\widetilde{Z}}^{(\ell)} \mathbf{W}^{(\ell)}_{g} \right)$\;
      }
      Calculate the distribution $Z$ via Eq. \ref{zij}\;
      Feed $\mathbf{H}^{(L)}$ to the decoder to construct the raw data $\mathbf{X}$\;
      Calculate $\mathcal{L}_{res}$, $\mathcal{L}_{clu}$, $\mathcal{L}_{gcn}$, respectively\;
      Calculate the loss function via Eq. \ref{loss}\;
      Back propagation and update parameters in SDCN\;
    }
    Calculate the clustering results based on distribution $Z$\;
    return $\mathbf{R}$\;
\end{algorithm}

Through this mechanism, SDCN can directly concentrate two different objectives, i.e. clustering objective and classification objective, in one loss function. And thus, the overall loss function of the our proposed SDCN is:
\begin{align}
\mathcal{L}=\mathcal{L}_{res} + \alpha \mathcal{L}_{clu} + \beta \mathcal{L}_{gcn},
\label{loss}
\end{align}
where $\alpha>0$ is a hyper-parameter that balances the clustering optimization and local structure preservation of raw data and $\beta>0$ is a coefficient that controls the disturbance of GCN module to the embedding space. 

In practice, after training until the maximum epochs, SDCN will get a stable result. Then we can set labels to samples. We choose the soft assignments in distribution $Z$ as the final clustering results. Because the representations learned by GCN contains two different kinds of information. The label assigned to sample $i$ is:
\begin{align}
r_{i}=\mathop{\arg\max}_{j} \ \ z_{ij},
\label{maxq}
\end{align}
where $z_{ij}$ is calculated in Eq. \ref{zij}.

The algorithm of the whole model is shown in Algorithm 1.

\subsection{Theory Analysis}
In this section, we will analyze how SDCN introduces structural information into the autoencoder. Before that, we give the definition of graph regularization and second-order graph regularization.

\begin{definition}
Graph regularization \cite{belkin2003laplacian}. Given a weighted graph $\mathcal{G}$, the objective of graph regularization is to minimize the following equation:
\begin{align}
\sum_{ij}\frac{1}{2}\left\|\mathbf{h}_{i}-\mathbf{h}_{j}\right\|_{2}^{2}w_{ij},
\end{align}
where $w_{ij}$ means the weight of the edge between node \textit{i} and node \textit{j}, and $\mathbf{h}_{i}$ is the representation of node \textit{i}.
\label{gr}
\end{definition}

Based on Definition \ref{gr}, we can find that the graph regularization indicates that if there is a larger weight between nodes $i$ and $j$, their representations should be more similar.

\begin{definition}
Second-order similarity. We assume that $\mathbf{A}$ is the adjacency matrix of graph $\mathcal{G}$ and $\mathbf{a}_{i}$ is the $i$-th column of $\mathbf{A}$. The second-order similarity between node \textit{i} and node \textit{j} is
\begin{align}
s_{ij}=\frac{\mathbf{a}_{i}^{T}\mathbf{a}_{j}}{\left\|\mathbf{a}_{i}\right\| \left\|\mathbf{a}_{j}\right\|}=\frac{\mathbf{a}_{i}^{T}\mathbf{a}_{j}}{\sqrt{d_{i}}\sqrt{d_{j}}}=\frac{\mathcal{{C}}}{\sqrt{d_{i}}\sqrt{d_{j}}},
\end{align}
where $\mathcal{C}$ is the number of common neighbors between node \textit{i} and node \textit{j} and $d_{i}$ is the degree of node \textit{i}.
\end{definition}

\begin{definition}
Second-order graph regularization. The objective of second-order graph regularization is to minimize the equation
\begin{align}
\sum_{i,j}\frac{1}{2}\left\|\mathbf{h}_{i}-\mathbf{h}_{j}\right\|_{2}^{2}s_{ij},
\end{align}
where $s_{ij}$ is the second-order similarity.
\label{sgr}
\end{definition}

Compared with Definition \ref{gr}, Definition \ref{sgr} imposes a high-order constraint, i.e., if two nodes have many common neighbors, their representations should also be more similar.

\begin{theorem}
GCN provides an approximate second-order graph regularization for the DNN representations.
\end{theorem}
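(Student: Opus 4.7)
The plan is to trace how the autoencoder-specific representation $\mathbf{H}^{(\ell-1)}$ flows through the delivery operator into subsequent GCN layers, and argue that after two propagations it has been multiplied by the square of the normalized adjacency $\mathbf{T}\coloneqq \widetilde{\mathbf{D}}^{-1/2}\widetilde{\mathbf{A}}\widetilde{\mathbf{D}}^{-1/2}$, which in turn acts as a smoothing by the second-order similarity matrix. First I would drop the nonlinearity $\phi$ and the per-layer weights $\mathbf{W}^{(\cdot)}$ for the analytical step (treating the weights as absorbed into the representations, a convention standard in GCN analyses such as SGC), and unroll the recursion $\widetilde{\mathbf{Z}}^{(k)}=(1-\epsilon)\mathbf{Z}^{(k)}+\epsilon\mathbf{H}^{(k)}$, $\mathbf{Z}^{(k+1)}=\mathbf{T}\widetilde{\mathbf{Z}}^{(k)}$ for $k=\ell-1,\ell$. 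The resulting expression isolates a term $(1-\epsilon)\,\epsilon\,\mathbf{T}^{2}\mathbf{H}^{(\ell-1)}$, showing that the DNN representation injected at layer $\ell-1$ is subsequently smoothed by $\mathbf{T}^{2}$.

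Next I would examine the entries of $\mathbf{T}^{2}$. A direct computation gives
\begin{equation*}
(\mathbf{T}^{2})_{ij}=\sum_{k}\frac{\widetilde{a}_{ik}\widetilde{a}_{kj}}{\widetilde{d}_{k}\sqrt{\widetilde{d}_{i}\widetilde{d}_{j}}}.
\end{equation*}
The numerator $\sum_{k}\widetilde{a}_{ik}\widetilde{a}_{kj}=(\widetilde{\mathbf{A}}^{2})_{ij}=\widetilde{\mathbf{a}}_{i}^{T}\widetilde{\mathbf{a}}_{j}$ is exactly the inner product appearing in the second-order similarity of Definition 2, while the normalizer $\sqrt{\widetilde{d}_{i}\widetilde{d}_{j}}$ matches the degree denominator of $s_{ij}$. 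Absorbing the middle factor $1/\widetilde{d}_{k}$ into a proportionality constant (equivalently, replacing it by a typical degree $\bar d$) yields $(\mathbf{T}^{2})_{ij}\approx s_{ij}/\bar d$, so $\mathbf{T}^{2}\mathbf{H}^{(\ell-1)}$ is, up to a scalar, the row-wise smoothing of $\mathbf{H}^{(\ell-1)}$ by the second-order similarity matrix $\mathbf{S}$.

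Finally I would close the loop by showing that smoothing with $\mathbf{S}$ is an (approximate) minimization step for the second-order graph regularization of Definition 3. Writing the objective as $\tfrac12\,\mathrm{tr}(\mathbf{H}^{T}\mathbf{L}_{s}\mathbf{H})$ with $\mathbf{L}_{s}=\mathbf{D}_{s}-\mathbf{S}$, a single gradient step on $\mathbf{H}^{(\ell-1)}$ has the form $(\mathbf{I}-\eta\mathbf{L}_{s})\mathbf{H}^{(\ell-1)}$, which for the appropriate step size coincides with the normalized smoothing $\mathbf{T}^{2}\mathbf{H}^{(\ell-1)}$ derived above. Hence each pair of GCN layers, fed by the delivery operator, performs an approximate gradient descent step on the second-order graph regularization applied to the autoencoder representation, proving the theorem.

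The main obstacle I expect is making the word \emph{approximate} honest: the argument needs (i) a justification for removing $\phi$ and $\mathbf{W}^{(\cdot)}$ from the linear-algebraic identity, (ii) a clean bound on the error introduced by replacing the varying $1/\widetilde{d}_{k}$ by a constant so that $(\mathbf{T}^{2})_{ij}\propto s_{ij}$ holds up to a controlled perturbation, and (iii) a remark reconciling the self-loop convention $\widetilde{\mathbf{A}}=\mathbf{A}+\mathbf{I}$ used by GCN with the plain $\mathbf{A}$ used in Definition 2, which amounts to a low-order correction in sparse graphs. These points, rather than any deep step, are where the proof must be written carefully.
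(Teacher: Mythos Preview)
Your argument is defensible at the same informal level as the paper, but it follows a genuinely different route. The paper works with a \emph{single} GCN layer: it writes $\hat{\mathbf{h}}_i=\sum_{j\in\mathcal{N}_i}\mathbf{h}_j/\sqrt{d_id_j}$ (taking $\phi=\mathrm{id}$, $\mathbf{W}=\mathbf{I}$), splits $\hat{\mathbf{h}}_i-\hat{\mathbf{h}}_j$ into a self term, a common-neighbor term $\mathcal{S}$, and non-common-neighbor terms $\mathcal{D}_i,\mathcal{D}_j$, and bounds $\|\hat{\mathbf{h}}_i-\hat{\mathbf{h}}_j\|_2$ by the triangle inequality. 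The point is purely combinatorial: when $s_{ij}$ is large the non-common pieces are small, so the upper bound on the pairwise distance shrinks, which is precisely the effect of the second-order regularizer in Definition~3. Your approach instead unrolls \emph{two} layers to expose $\mathbf{T}^2\mathbf{H}^{(\ell-1)}$, identifies $(\mathbf{T}^2)_{ij}$ with $s_{ij}$ up to the degree-weighting and self-loop corrections, and then reads $\mathbf{S}$-smoothing as a gradient step on $\tfrac12\mathrm{tr}(\mathbf{H}^T\mathbf{L}_s\mathbf{H})$. The paper's decomposition buys a one-layer statement directly about pairwise distances (the quantity the regularizer actually penalizes), at the cost of a bound that is only qualitative. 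Your matrix/Laplacian view is algebraically cleaner and extends naturally to higher orders, but it requires an extra propagation and stacks more approximations (constant $1/\widetilde{d}_k$, self-loop removal, linearization) whose cumulative effect you would need to control to make ``approximate'' honest---exactly the obstacles you already flagged.
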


\begin{proof}
Here we focus on the $\ell$-th layer of SDCN. $\mathbf{h}_{i}$ is the $i$-th row of $\mathbf{H}^{(\ell)}$, representing the data representation of sample $i$ learned by autoencoder and $\hat{\mathbf{h}}_{i}=\phi \left( \sum_{j \in \mathcal{N}_{i}}\frac{\mathbf{h}_{j}}{\sqrt{d_{i}}\sqrt{d_{j}}} \mathbf{W}\right)$ is the representation $\mathbf{h}_{i}$ passing through the GCN layer. Here we assume that $\phi(x)=x$ and $\mathbf{W}=\mathbf{I}$, and $\hat{\mathbf{h}}_{i}$ can be seen as the average of neighbor representations. Hence we can divide $\hat{\mathbf{h}}_{i}$ into three parts: the node representations $\frac{\mathbf{h}_{i}}{d_{i}}$, the sum of common neighbor representations $\mathcal{S}=\sum_{p\in\mathcal{N}_{i}\cap\mathcal{N}_{j}} \frac{\mathbf{h}_{p}}{\sqrt{d_{p}}}$ and the sum of non-common neighbor representations $\mathcal{D}_{i}=\sum_{q\in\mathcal{N}_{i}-\mathcal{N}_{i}\cap\mathcal{N}_{j}} \frac{\mathbf{h}_{q}}{\sqrt{d_{q}}}$, where $\mathcal{N}_{i}$ is the neighbors of node \textit{i}. The distance between the representations $\hat{\mathbf{h}}_{i}$ and $\hat{\mathbf{h}}_{j}$ is:
\begin{equation}
\label{regularization}
\begin{aligned}
& \left\| \hat{\mathbf{h}}_{i} - \hat{\mathbf{h}}_{j} \right\|_{2}\\
& = \left\| \left( \frac{\mathbf{h}_{i}}{d_{i}} - \frac{\mathbf{h}_{j}}{d_{j}} \right) + \left( \frac{\mathcal{S}}{\sqrt{d_{i}}} - \frac{\mathcal{S}}{\sqrt{d_{j}}} +  \right) + \left( \frac{\mathcal{D}_{i}}{\sqrt{d_{i}}} - \frac{\mathcal{D}_{j}}{\sqrt{d_{j}}} \right) \right\|_{2} \\
& \leq \left\| \frac{\mathbf{h}_{i}}{d_{i}} - \frac{\mathbf{h}_{j}}{d_{j}} \right\|_{2} + \left| \frac{\sqrt{d_{i}} - \sqrt{d_{j}}}{\sqrt{d_{i}}\sqrt{d_{j}}} \right| \left\| \mathcal{S} \right\|_{2} + \left\| \frac{\mathcal{D}_{i}}{\sqrt{d_{i}}} - \frac{\mathcal{D}_{j}}{\sqrt{d_{j}}} \right\|_{2} \\
& \leq \left\| \frac{\mathbf{h}_{i}}{d_{i}} - \frac{\mathbf{h}_{j}}{d_{j}} \right\|_{2} + \left| \frac{\sqrt{d_{i}} - \sqrt{d_{j}}}{\sqrt{d_{i}}\sqrt{d_{j}}} \right| \left\| \mathcal{S} \right\|_{2} + \left( \left\| \frac{\mathcal{D}_{i}}{\sqrt{d_{i}}} \right\|_{2} + \left\| \frac{\mathcal{D}_{j}}{\sqrt{d_{j}}} \right\|_{2} \right).\\
\end{aligned}
\end{equation}
We can find that the first term of Eq. \ref{regularization} is independent of the second-order similarity. Hence the upper bound of the distance between two node representations is only related to the second and third terms. For the second item of Eq. \ref{regularization}, if $d_{i} \ll d_{j}$, $w_{ij} \le \sqrt \frac{d_{i}}{d_{j}}$, which is very small and not consistent with the precondition. If $d_{i} \approx d_{j}$, the effect of the second item is paltry and can be ignored. For the third item of Eq. \ref{regularization}, if two nodes have many common neighbors, their non-common neighbors will be very few, and the values of $\left\| \frac{\mathcal{D}_{i}}{\sqrt{d_{i}}} \right\|_{2}$ and $\left\| \frac{\mathcal{D}_{j}}{\sqrt{d_{j}}} \right\|_{2}$ are positively correlated with non-common neighbors. If the second-order similarity $s_{ij}$ is large, the upper bound of $\left\| \hat{\mathbf{h}}_{i} - \hat{\mathbf{h}}_{j} \right\|_{2}$ will drop. In an extreme case, i.e. $w_{ij}=1$, $\left\| \hat{\mathbf{h}}_{i} - \hat{\mathbf{h}}_{j} \right\|_{2}=\frac{1}{d}\left\| \mathbf{h}_{i} - \mathbf{h}_{j} \right\|_{2}.$
\end{proof}

This shows that after the DNN representations pass through the GCN layer, if the nodes with large second-order similarity, GCN will force the representations of nodes to be close to each other, which is same to the idea of second-order graph regularization.

\begin{theorem}
The representation $Z^{(\ell)}$ learned by SDCN is equivalent to the sum of the representations with different order structural information.
\end{theorem}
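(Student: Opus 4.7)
The plan is to unfold the layer-wise recurrence defining $\mathbf{Z}^{(\ell)}$ and exhibit it as a weighted sum of terms of the form $T^{k}\mathbf{H}^{(\ell-k)}$, where $T=\mathbf{\widetilde{D}}^{-\frac{1}{2}}\mathbf{\widetilde{A}}\mathbf{\widetilde{D}}^{-\frac{1}{2}}$ is the normalized propagation operator. Since $T^{k}$ aggregates information from the $k$-hop neighborhood of every node, each summand will correspond to a distinct order of structural information, which is precisely what the theorem asserts.

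First, following the convention established in Theorem~1, I would linearize the GCN module by assuming $\phi(x)=x$ and $\mathbf{W}^{(\ell)}=\mathbf{I}$, so that Eq.~\ref{ll} becomes $\mathbf{Z}^{(\ell)} = T\,\mathbf{\widetilde{Z}}^{(\ell-1)}$. Combining this with the delivery operator $\mathbf{\widetilde{Z}}^{(\ell-1)}=(1-\epsilon)\mathbf{Z}^{(\ell-1)}+\epsilon\mathbf{H}^{(\ell-1)}$ from Eq.~\ref{add} yields the single-step recurrence
\[
\mathbf{Z}^{(\ell)} = (1-\epsilon)\,T\,\mathbf{Z}^{(\ell-1)} + \epsilon\,T\,\mathbf{H}^{(\ell-1)},
\]
with base case $\mathbf{Z}^{(1)} = T\mathbf{X}$ from Eq.~\ref{gcnfirst}. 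I would then prove by induction on $\ell$ that
\[
\mathbf{Z}^{(\ell)} \;=\; (1-\epsilon)^{\ell-1}\, T^{\ell}\mathbf{X} \;+\; \sum_{k=1}^{\ell-1} \epsilon(1-\epsilon)^{k-1}\, T^{k}\mathbf{H}^{(\ell-k)}.
\]
The $\ell=1$ case is immediate (the sum is empty), and the inductive step follows by substituting the hypothesis for $\mathbf{Z}^{(\ell-1)}$ into the recurrence, multiplying by $(1-\epsilon)T$, adding the fresh $\epsilon T\mathbf{H}^{(\ell-1)}$ contribution, and shifting the summation index.

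Interpretation then closes the argument: the residual $(1-\epsilon)^{\ell-1}T^{\ell}\mathbf{X}$ encodes $\ell$-order structure propagated from the raw input, while each term $T^{k}\mathbf{H}^{(\ell-k)}$ encodes the autoencoder representation at depth $\ell-k$ enriched with $k$-order graph structure. Thus $\mathbf{Z}^{(\ell)}$ is literally a linear combination of representations carrying every order of structural information between $1$ and $\ell$, as claimed. As a side remark I would point out that the coefficient $(1-\epsilon)^{\ell-1}$ of the purely graph-propagated term decays geometrically in $\ell$, so the pathological contribution that drives over-smoothing is actively suppressed by the autoencoder-derived terms injected at each layer, tying this theorem to the over-smoothing discussion preceding the statement.

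The main obstacle I anticipate is handling the nonlinearity $\phi$ and non-trivial weights $\mathbf{W}^{(\ell)}$ rigorously: the clean telescoping identity rests on linearization, so a fully general treatment would need either (i) adopting the linearization as a modelling assumption, exactly as Theorem~1 does, or (ii) weakening ``equivalent to'' to a structural statement about which orders of neighborhood information the representation is able to express. I would adopt option~(i) for brevity and consistency with the previous theorem.
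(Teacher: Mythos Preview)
Your proposal is correct and follows essentially the same route as the paper: linearize the GCN layer ($\phi=\mathrm{id}$, $\mathbf{W}^{(\ell)}=\mathbf{I}$), unroll the recurrence $\mathbf{Z}^{(\ell)}=(1-\epsilon)T\mathbf{Z}^{(\ell-1)}+\epsilon T\mathbf{H}^{(\ell-1)}$, and read off the result as a weighted sum of $T^{k}$-propagated representations, with the geometric factor $(1-\epsilon)^{\ell-1}$ on the pure-GCN term explaining the over-smoothing mitigation. Your closed form is in fact more carefully indexed than the paper's stated expression (where the pairing of powers of $\widehat{\mathbf{A}}$ with the $\mathbf{H}^{(\ell)}$ indices is slightly off), but the argument and interpretation coincide.
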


\begin{proof} For the simplicity of the proof, let us assume that $\phi\left(x\right)=x$, $\mathbf{b}_{e}^{(\ell)}=\mathbf{0}$ and $\mathbf{W}_{g}^{(\ell)}=\mathbf{I}, \ \forall \ell \in \left[ 1,2,\cdots,L\right]$. We can rewrite Eq. \ref{ll} as
\begin{equation}
\begin{aligned}
& \mathbf{Z}^{(\ell+1)}= \mathbf{\widehat{A}}\mathbf{\widetilde{Z}}^{(\ell)}\\
& \mathbf{Z}^{(\ell+1)}= (1-\epsilon)\mathbf{\widehat{A}}\mathbf{Z}^{(\ell)}+\epsilon\mathbf{\widehat{A}}\mathbf{H}^{(\ell)},
\end{aligned}
\end{equation}
where $\mathbf{\widehat{A}}=\mathbf{\widetilde{D}}^{-\frac{1}{2}} \mathbf{\widetilde{A}} \mathbf{\widetilde{D}}^{-\frac{1}{2}}$. After $L$-th propagation step, the result is
\begin{equation}
\begin{aligned}
\mathbf{Z}^{(L)}= \left(1-\epsilon\right)^{L}\mathbf{\widehat{A}}^{L}\mathbf{X}+\epsilon\sum_{\ell=1}^{L}\left(1-\epsilon\right)^{\ell-1}\mathbf{\widehat{A}}^{\ell}\mathbf{H}^{(\ell)}.
\end{aligned}
\label{over-smoothing}
\end{equation}
Note that $\mathbf{\widehat{A}}^{L}\mathbf{X}$ is the output of standard GCN, which may suffer from the over-smoothing problem. Moreover, if $L\to\infty$, the left term tends to 0 and the right term dominants the data representation. We can clearly see that the right term is the sum of different representations, i.e. $\mathbf{H}^{(\ell)}$, with different order structural information.
\end{proof}

The advantages of the delivery operator in Eq. \ref{add} are two-folds: one is that the data representation $Z^{(\ell)}$ learn by SDCN contains different structural information. Another is that it can alleviate the over-smoothing phenomenon in GCN. Because multilayer GCNs focus on higher-order information, the GCN module in SDCN is the sum of the representations with different order structural information. Similar to \cite{klicpera2018predict}, our method also uses the fusion of different order information to alleviate the over-smoothing phenomenon in GCN. However, different from \cite{klicpera2018predict} treating different order adjacency matrices with the same representations, our SDCN gives different representations to different order adjacency matrices. This makes our model incorporate more information.

\subsection{Complexity Analysis}
In this work, we denote $d$ as the dimension of the input data and the dimension of each layer of the autoencoder is $d_{1}, d_{2}, \cdots, d_{L}$. The size of weight matrix in the first layer of the encoder is $\mathbf{W}^{(1)}_{e} \in \mathbb{R}^{d \times d_{1}}$. $N$ is the number of the input data. The time complexity of the autoencoder is $\mathcal{O}(Nd^{2}d^{2}_{1}...d^{2}_{L})$. As for the GCN module, because the operation of GCN can be efficiently implemented using sparse matrix, the time complexity is linear with the number of edges $|\mathcal{E}|$. The time complexity is $\mathcal{O}(|\mathcal{E}|dd_{1}...d_{L})$. In addition, we suppose that there are $K$ classes in the clustering task, so the time complexity of the Eq. \ref{qij} is $\mathcal{O}(NK+N\log N)$ corresponding to \cite{xie2016unsupervised}. The overall time complexity of our model is $\mathcal{O}(Nd^{2}d^{2}_{1}...d^{2}_{L}+|\mathcal{E}|dd_{1}...d_{L}+NK+N\log N)$, which is linearly related to the number of samples and edges.

\section{EXPERIMENTS}

\begin{table}
\centering
\caption{The statistics of the datasets.}
\begin{tabular}{@{}c|c|c|c|c@{}}
\toprule
\textbf{Dataset} & \textbf{Type} & \textbf{Samples} & \textbf{Classes} & \textbf{Dimension} \\ \midrule
\textbf{USPS}    & Image    & 9298             & 10               & 256                \\
\textbf{HHAR}    & Record    & 10299            & 6                & 561                \\
\textbf{Reuters}    & Text    & 10000            & 4                & 2000                \\
\textbf{ACM}     & Graph         & 3025             & 3                & 1870               \\
\textbf{DBLP}     & Graph         & 4058             & 4                & 334               \\
\textbf{Citeseer}     & Graph         & 3327             & 6                & 3703               \\\bottomrule
\end{tabular}
\label{statistics}
\end{table}

\begin{table*}[tp]
\centering
\caption{Clustering results on six datasets (mean$\pm$std). The bold numbers represent the best results and the numbers with asterisk are the best results of the baselines.}
\label{tab:graph}
\resizebox{\textwidth}{47mm}
{
\begin{tabular}{c|c|ccccccc|cc}
\hline
Dataset & Metric & $K$-means & AE & DEC & IDEC & GAE & VGAE & DAEGC & SDCN$_{Q}$ & SDCN \\
\hline
\multirow{4}{*}{USPS} & ACC & 66.82$\pm$0.04 & 71.04$\pm$0.03 & 73.31$\pm$0.17 & 76.22$\pm$0.12$^{*}$ & 63.10$\pm$0.33 & 56.19$\pm$0.72 & 73.55$\pm$0.40 & 77.09$\pm$0.21 & \textbf{78.08$\pm$0.19} \\
 & NMI & 62.63$\pm$0.05 & 67.53$\pm$0.03 & 70.58$\pm$0.25 & 75.56$\pm$0.06$^{*}$ & 60.69$\pm$0.58 & 51.08$\pm$0.37 & 71.12$\pm$0.24 & 77.71$\pm$0.21 & \textbf{79.51$\pm$0.27} \\
 & ARI & 54.55$\pm$0.06 & 58.83$\pm$0.05 & 63.70$\pm$0.27 & 67.86$\pm$0.12$^{*}$ & 50.30$\pm$0.55 & 40.96$\pm$0.59 & 63.33$\pm$0.34 & 70.18$\pm$0.22 & \textbf{71.84$\pm$0.24} \\
 & F1 & 64.78$\pm$0.03 & 69.74$\pm$0.03 & 71.82$\pm$0.21 & 74.63$\pm$0.10$^{*}$ & 61.84$\pm$0.43 & 53.63$\pm$1.05 & 72.45$\pm$0.49 & 75.88$\pm$0.17 & \textbf{76.98$\pm$0.18} \\ \hline
\multirow{4}{*}{HHAR} & ACC & 59.98$\pm$0.02 & 68.69$\pm$0.31 & 69.39$\pm$0.25 & 71.05$\pm$0.36 & 62.33$\pm$1.01 & 71.30$\pm$0.36 & 76.51$\pm$2.19$^{*}$ & 83.46$\pm$0.23 & \textbf{84.26$\pm$0.17} \\
 & NMI & 58.86$\pm$0.01 & 71.42$\pm$0.97 & 72.91$\pm$0.39 & 74.19$\pm$0.39$^{*}$ & 55.06$\pm$1.39 & 62.95$\pm$0.36 & 69.10$\pm$2.28 & 78.82$\pm$0.28 & \textbf{79.90$\pm$0.09} \\
 & ARI & 46.09$\pm$0.02 & 60.36$\pm$0.88 & 61.25$\pm$0.51 & 62.83$\pm$0.45$^{*}$ & 42.63$\pm$1.63 & 51.47$\pm$0.73 & 60.38$\pm$2.15 & 71.75$\pm$0.23 & \textbf{72.84$\pm$0.09} \\
 & F1 & 58.33$\pm$0.03 & 66.36$\pm$0.34 & 67.29$\pm$0.29 & 68.63$\pm$0.33 & 62.64$\pm$0.97 & 71.55$\pm$0.29 & 76.89$\pm$2.18$^{*}$ & 81.45$\pm$0.14 & \textbf{82.58$\pm$0.08} \\ \hline
\multirow{4}{*}{Reuters} & ACC & 54.04$\pm$0.01 & 74.90$\pm$0.21 & 73.58$\pm$0.13 & 75.43$\pm$0.14$^{*}$ & 54.40$\pm$0.27 & 60.85$\pm$0.23 & 65.50$\pm$0.13 & \textbf{79.30$\pm$0.11} & 77.15$\pm$0.21 \\
 & NMI & 41.54$\pm$0.51 & 49.69$\pm$0.29 & 47.50$\pm$0.34 & 50.28$\pm$0.17$^{*}$ & 25.92$\pm$0.41 & 25.51$\pm$0.22 & 30.55$\pm$0.29 & \textbf{56.89$\pm$0.27} & 50.82$\pm$0.21 \\
 & ARI & 27.95$\pm$0.38 & 49.55$\pm$0.37 & 48.44$\pm$0.14 & 51.26$\pm$0.21$^{*}$ & 19.61$\pm$0.22 & 26.18$\pm$0.36 & 31.12$\pm$0.18 & \textbf{59.58$\pm$0.32} & 55.36$\pm$0.37 \\
 & F1 & 41.28$\pm$2.43 & 60.96$\pm$0.22 & 64.25$\pm$0.22$^{*}$ & 63.21$\pm$0.12 & 43.53$\pm$0.42 & 57.14$\pm$0.17 & 61.82$\pm$0.13 & \textbf{66.15$\pm$0.15} & 65.48$\pm$0.08 \\ \hline
\multirow{4}{*}{ACM} & ACC & 67.31$\pm$0.71 & 81.83$\pm$0.08 & 84.33$\pm$0.76 & 85.12$\pm$0.52 & 84.52$\pm$1.44 & 84.13$\pm$0.22 & 86.94$\pm$2.83$^{*}$ & 86.95$\pm$0.08 & \textbf{90.45$\pm$0.18} \\
 & NMI & 32.44$\pm$0.46 & 49.30$\pm$0.16 & 54.54$\pm$1.51 & 56.61$\pm$1.16$^{*}$ & 55.38$\pm$1.92 & 53.20$\pm$0.52 & 56.18$\pm$4.15 & 58.90$\pm$0.17 & \textbf{68.31$\pm$0.25} \\
 & ARI & 30.60$\pm$0.69 & 54.64$\pm$0.16 & 60.64$\pm$1.87 & 62.16$\pm$1.50$^{*}$ & 59.46$\pm$3.10 & 57.72$\pm$0.67 & 59.35$\pm$3.89 & 65.25$\pm$0.19 & \textbf{73.91$\pm$0.40} \\
 & F1 & 67.57$\pm$0.74 & 82.01$\pm$0.08 & 84.51$\pm$0.74 & 85.11$\pm$0.48 & 84.65$\pm$1.33 & 84.17$\pm$0.23 & 87.07$\pm$2.79$^{*}$ & 86.84$\pm$0.09 & \textbf{90.42$\pm$0.19} \\ \hline
\multirow{4}{*}{DBLP} & ACC & 38.65$\pm$0.65 & 51.43$\pm$0.35 & 58.16$\pm$0.56 & 60.31$\pm$0.62 & 61.21$\pm$1.22 & 58.59$\pm$0.06 & 62.05$\pm$0.48$^{*}$ & 65.74$\pm$1.34 & \textbf{68.05$\pm$1.81} \\
 & NMI & 11.45$\pm$0.38 & 25.40$\pm$0.16 & 29.51$\pm$0.28 & 31.17$\pm$0.50 & 30.80$\pm$0.91 & 26.92$\pm$0.06 & 32.49$\pm$0.45$^{*}$ & 35.11$\pm$1.05 & \textbf{39.50$\pm$1.34} \\
 & ARI & 6.97$\pm$0.39 & 12.21$\pm$0.43 & 23.92$\pm$0.39 & 25.37$\pm$0.60$^{*}$ & 22.02$\pm$1.40 & 17.92$\pm$0.07 & 21.03$\pm$0.52 & 34.00$\pm$1.76 & \textbf{39.15$\pm$2.01} \\
 & F1 & 31.92$\pm$0.27 & 52.53$\pm$0.36 & 59.38$\pm$0.51 & 61.33$\pm$0.56 & 61.41$\pm$2.23 & 58.69$\pm$0.07 & 61.75$\pm$0.67$^{*}$ & 65.78$\pm$1.22 & \textbf{67.71$\pm$1.51} \\ \hline
\multirow{4}{*}{Citeseer} & ACC & 39.32$\pm$3.17 & 57.08$\pm$0.13 & 55.89$\pm$0.20 & 60.49$\pm$1.42 & 61.35$\pm$0.80 & 60.97$\pm$0.36 & 64.54$\pm$1.39$^{*}$ & 61.67$\pm$1.05 & \textbf{65.96$\pm$0.31} \\
 & NMI & 16.94$\pm$3.22 & 27.64$\pm$0.08 & 28.34$\pm$0.30 & 27.17$\pm$2.40 & 34.63$\pm$0.65 & 32.69$\pm$0.27 & 36.41$\pm$0.86$^{*}$ & 34.39$\pm$1.22 & \textbf{38.71$\pm$0.32} \\
 & ARI & 13.43$\pm$3.02 & 29.31$\pm$0.14 & 28.12$\pm$0.36 & 25.70$\pm$2.65 & 33.55$\pm$1.18 & 33.13$\pm$0.53 & 37.78$\pm$1.24$^{*}$ & 35.50$\pm$1.49 & \textbf{40.17$\pm$0.43} \\
 & F1 & 36.08$\pm$3.53 & 53.80$\pm$0.11 & 52.62$\pm$0.17 & 61.62$\pm$1.39 & 57.36$\pm$0.82 & 57.70$\pm$0.49 & 62.20$\pm$1.32$^{*}$ & 57.82$\pm$0.98 & \textbf{63.62$\pm$0.24} \\ \hline
\end{tabular}
}
\label{results}
\end{table*}

\subsection{Datasets}
Our proposed SDCN is evaluated on six datasets. The statistics of these datasets are shown in Table \ref{statistics} and the detailed descriptions are the followings:

\begin{itemize}
\item \textbf{USPS}\cite{le1990handwritten}: The USPS dataset contains 9298 gray-scale handwritten digit images with size of 16x16 pixels. The features are the gray value of pixel points in images and all features are normalized to [0, 2].

\item \textbf{HHAR}\cite{stisen2015smart}: The Heterogeneity Human Activity Recognition (HHAR) dataset contains 10299 sensor records from smart phones and smart watches. All samples are partitioned into 6 categories of human activities, including biking, sitting, standing, walking, stair up and stair down.

\item \textbf{Reuters}\cite{lewis2004rcv1}: It is a text dataset containing around 810000 English news stories labeled with a category tree. We use 4 root categories: corporate/industrial, government/social, markets and economics as labels and sample a random subset of 10000 examples for clustering.

\item \textbf{ACM\footnotemark[2]}\cite{HAN}: This is a paper network from the ACM dataset. There is an edge between two papers if they are written by same author. Paper features are the bag-of-words of the keywords. We select papers published in KDD, SIGMOD, SIGCOMM, MobiCOMM and divide the papers into three classes (database, wireless communication, data mining) by their research area.

\footnotetext[2]{http://dl.acm.org/}

\item \textbf{DBLP\footnotemark[3]}\cite{HAN}: This is an author network from the DBLP dataset. There is an edge between two authors if they are the co-author relationship. The authors are divided into four areas: database, data mining, machine learning and information retrieval. We label each author's research area according to the conferences they submitted. Author features are the elements of a bag-of-words represented of keywords.

\footnotetext[3]{https://dblp.uni-trier.de}

\item \textbf{Citeseer\footnotemark[4]}: It is a citation network which contains sparse bag-of-words feature vectors for each document and a list of citation links between documents. The labels contain six area: agents, artificial intelligence, database, information retrieval, machine language, and HCI.

\footnotetext[4]{http://citeseerx.ist.psu.edu/index}

\end{itemize}

\subsection{Baselines}
We compare our proposed method SDCN with three types of methods, including clustering methods on raw data, DNN-based clustering methods and GCN-based graph clustering methods.

\begin{itemize}
\item \textbf{$K$-means} \cite{hartigan1979algorithm}: A classical clustering method based on the raw data.

\item \textbf{AE} \cite{hinton2006reducing}: It is a two-stage deep clustering algorithm which performs $K$-means on the representations learned by autoencoder.

\item \textbf{DEC} \cite{xie2016unsupervised}: It is a deep clustering method which designs a clustering objective to guide the learning of the data representations.

\item \textbf{IDEC} \cite{guo2017improved}: This method adds a reconstruction loss
to DEC, so as to learn better representation.

\item \textbf{GAE \& VGAE} \cite{kipf2016variational}: It is an unsupervised graph embedding method using GCN to learn data representations.

\item \textbf{DAEGC} \cite{wang2019attributed}: It uses an attention network to learn the node representations and employs a clustering loss to supervise the process of graph clustering.

\item \textbf{SDCN$_{Q}$}: The variant of SDCN with distribution $Q$.

\item \textbf{SDCN}: The proposed method.
\end{itemize}

\textbf{Metrics.} We employ four popular metrics: Accuracy (ACC), Normalized Mutual Information (NMI), Average Rand Index (ARI) and macro F1-score (F1). For each metric, a larger value implies a better clustering result.

\textbf{Parameter Setting.} We use the pre-trained autoencoder for all DNN-based clustering methods (AE+$K$-means, DEC, IDEC) and SDCN. We train the autoencoder end-to-end using all data points with 30 epochs and the learning rate is $10^{-3}$. In order to be consistent with previous methods \cite{xie2016unsupervised, guo2017improved}, we set the dimension of the autoencoder to $d$-500-500-2000-10, where $d$ is the dimension of the input data. The dimension of the layers in GCN module is the same to the autoencoder. As for the GCN-based methods, we set the dimension of GAE and VAGE to $d$-256-16 and train them with 30 epochs for all datasets. For DAEGC, we use the setting of \cite{wang2019attributed}. In hyperparameter search, we try $\left\{ 1, 3, 5 \right\}$ for the update interval in DEC and IDEC, $\left\{1, 0.1, 0.01, 0.001\right\}$ for the hyperparameter $\gamma$ in IDEC and report the best results. For our SDCN, we uniformly set $\alpha=0.1$ and $\beta=0.01$ for all the datasets because our method is not sensitive to hyperparameters. For the non-graph data, we train the SDCN with 200 epochs, and for graph data, we train it with 50 epochs. Because the graph structure with prior knowledge, i.e. citation network, contains more information than KNN graph, which can accelerate convergence speed. The batch size is set to 256 and learning rate is set to $10^{-3}$ for USPS, HHAR, ACM, DBLP and $10^{-4}$ for Reuters, Citeseer. For all methods using $K$-means algorithm to generate clustering assignments, we initialize 20 times and select the best solution. We run all methods 10 times and report the average results to prevent extreme cases.

\subsection{Analysis of Clustering Results}
Table \ref{results} shows the clustering results on six datasets. Note that in USPS, HHAR and Reuters, we use the KNN graph as the input of the GCN module, while for ACM, DBLP and Citeseer, we use the original graph. We have the following observations:

\begin{itemize}
\item For each metric, our methods SDCN and SDCN$_{Q}$ achieve the best results in all the six datasets. In particular, compared with the best results of the baselines, our approach achieves a significant improvement of 6\% on ACC, 17\% on NMI and 28\% on ARI averagely. The reason is that SDCN successfully integrates the structural information into deep clustering and the dual self-supervised module guides the update of autoencoder and GCN, making them enhance each other.

\item SDCN generally achieves better cluster results than SDCN$_{Q}$. The reason is that SDCN uses the representations containing the structural information learned by GCN, while SDCN$_{Q}$ mainly uses the representations learned by the autoencoder. However, in Reuters, the result of SDCN$_{Q}$ is much better than SDCN. Because in the KNN graph of Reuters, many different classes of nodes are connected together, which contains much wrong structural information. Therefore, an important prerequisite for the application of GCN is to construct a KNN graph with less noise.

\item Clustering results of autoencoder based methods (AE, DEC, IDEC) are generally better than those of GCN-based methods (GAE, VAGE, DAEGC) on the data with KNN graph, while GCN-based methods usually perform better on the data with graph structure. The reason is that GCN-based methods only use structural information to learn the data representation. When the structural information in the graph is not clear enough, e.g. KNN graph, the performance of the GCN-based methods will decline. Besides, SDCN integrates structural information into deep clustering, so its clustering performance is better than these two methods.

\item Comparing the results of AE with DEC and the results of GAE with DAEGC, we can find that the clustering loss function, defined in Eq. \ref{clu}, plays an important role in improving the deep clustering performance. Because IDEC and DAEGC can be seen as the combination of the clustering loss with AE and GAE, respectively. It improves the cluster cohesion by making the data representation closer to the cluster centers, thus improving the clustering results.
\end{itemize}

\subsection{Analysis of Variants}
We compare our model with two variants to verify the ability of GCN in learning structural information and the effectiveness of the delivery operator. Specifically, we define the following variants:

\begin{figure}[htbp]
\centering
\subfigure[Datasets with KNN graph]{
\includegraphics[width=4cm]{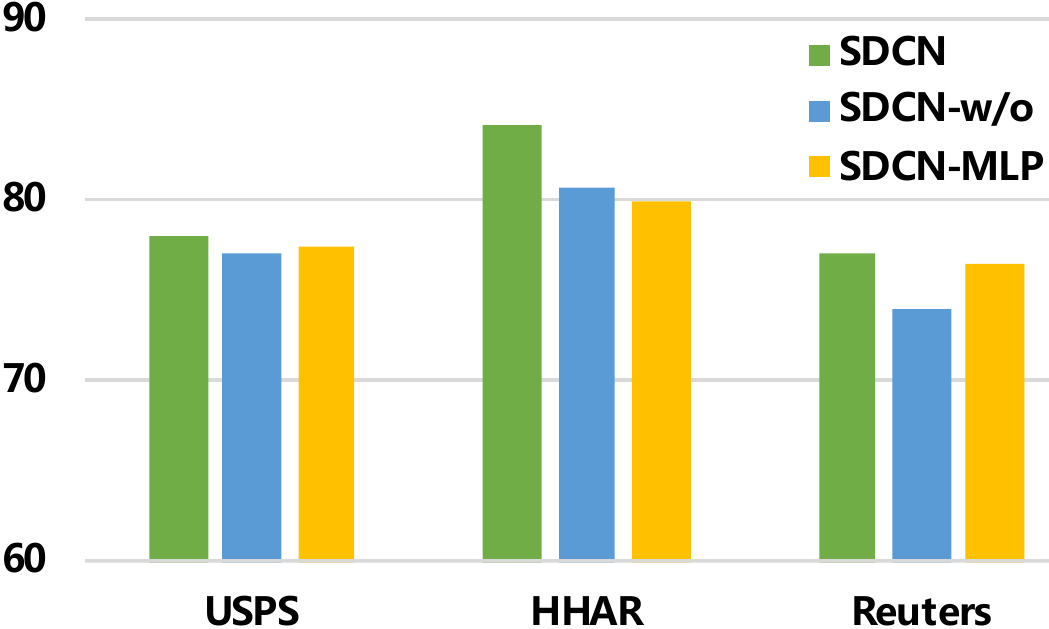}
\label{vknn}
}
\subfigure[Datasets with original graph]{
\includegraphics[width=4cm]{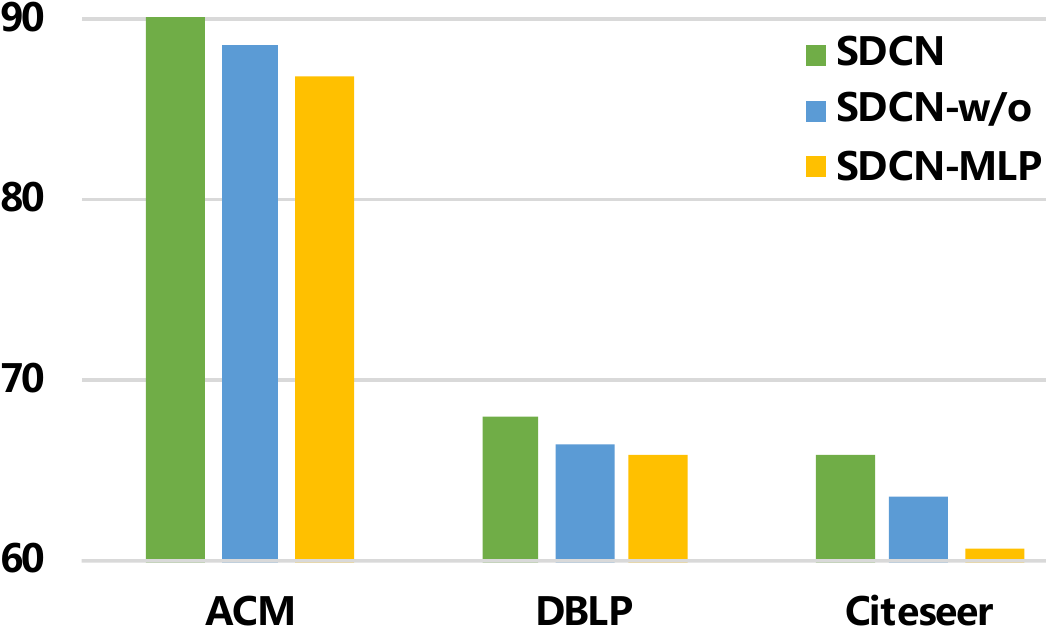}
\label{vatt}
}
\caption{Clustering accuracy with different variants}
\label{variants}
\end{figure}

\begin{itemize}
\item \textbf{SDCN-w/o:} This variant is SDCN without delivery operator, which is used to validate the effectiveness of our proposed delivery operator.

\item \textbf{SDCN-MLP:} This variant is SDCN replacing the GCN module with the same number of layers of multilayer perceptron (MLP), which is used to validate the advantages of GCN in learning structural information.
\end{itemize}

From Figure \ref{variants}, we have the following observations:

\begin{itemize}
\item In Figure \ref{vknn}, we can find that the clustering accuracy of SDCN-MLP is better than SDCN-w/o in Reuters and achieves similar results in USPS and HHAR. This shows that in the KNN graph, without delivery operator, the ability of GCN in learning structural information is severely limited. The reason is that multilayer GCN will produce a serious over-smoothing problem, leading to the decrease of the clustering results. On the other hand, SDCN is better than SDCN-MLP. This proves that the delivery operator can help GCN alleviate the over-smoothing problem and learn better data representation.

\item In Figure \ref{vatt}, we can find that the clustering accuracy of SDCN-w/o is better than SDCN-MLP in all three datasets containing original graph. This shows that GCN has the powerful ability in learning data representation with structural information. Besides, SDCN performs better than SDCN-w/o in the three datasets. This proves that there still exists over-smoothing problem in the SDCN-w/o, but the good graph structure still makes SDCN-w/o achieve not bad clustering results.

\item Comparing the results in Figure \ref{vknn} and Figure \ref{vatt}, we can find no matter on which types of datasets, SDCN achieves the best performance, compared with SDCN-w/o and SDCN-MLP. This proves that both the delivery operator and GCN play an important role in improving clustering quality.
\end{itemize}

\begin{table}
\caption{Effect of different propagation layers ($L$)}
\begin{tabular}{@{}c|c|cccc@{}}
\toprule
\multicolumn{2}{c|}{} & ACC & NMI & ARI & F1 \\ \midrule
\multicolumn{1}{l|}{\multirow{4}{*}{ACM}} & SDCN-4 & \textbf{90.45} & \textbf{68.31} & \textbf{73.91} & \textbf{90.42} \\
\multicolumn{1}{l|}{} & SDCN-3 & 89.06 & 64.86 & 70.51 & 89.03 \\
\multicolumn{1}{l|}{} & SDCN-2 & 89.12 & 66.48 & 70.94 & 89.04 \\
\multicolumn{1}{l|}{} & SDCN-1 & 77.69 & 51.59 & 50.13 & 74.62 \\ \midrule
\multirow{4}{*}{DBLP} & SDCN-4 & \textbf{68.05} & \textbf{39.51} & \textbf{39.15} & \textbf{67.71} \\
 & SDCN-3 & 65.11 & 36.81 & 36.03 & 64.98 \\
 & SDCN-2 & 66.72 & 37.19 & 37.58 & 65.37 \\
 & SDCN-1 & 64.19 & 30.69 & 33.62 & 60.44 \\ \midrule
\multirow{4}{*}{Citeseer} & SDCN-4 & \textbf{65.96} & \textbf{38.71} & \textbf{40.17} & \textbf{61.62} \\
 & SDCN-3 & 59.18 & 32.11 & 32.16 & 55.92 \\
 & SDCN-2 & 60.96 & 33.69 & 34.49 & 57.31 \\
 & SDCN-1 & 58.58 & 32.91 & 32.31 & 52.38 \\ \bottomrule
\end{tabular}
\label{layers}
\end{table}

\subsection{Analysis of Different Propagation Layers}
To investigate whether SDCN benefits from multilayer GCN, we vary the depth of the GCN module while keeping the DNN module unchanged. In particular, we search the number of layers in the range of $\left\{1,2,3,4\right\}$. 
There are a total of four layers in the encoder part of the DNN module in SDCN, generating the representation $\mathbf{H}^{(1)}$, $\mathbf{H}^{(2)}$, $\mathbf{H}^{(3)}$, $\mathbf{H}^{(4)}$, respectively. SDCN-$L$ represents that there is a total of $L$ layers in the GCN module. 
For example, SDCN-2 means that $\mathbf{H}^{(3)}$, $\mathbf{H}^{(4)}$ will be transferred to the corresponding GCN layers for propagating.
We choose the datasets with original graph to verify the effect of the number of the propagation layers on the clustering effect because they have the nature structural information. From Table \ref{layers}, we have the following observations:

\begin{itemize}
\item Increasing the depth of SDCN substantially enhances the clustering performance. It is clear that SDCN-2, SDCN-3 and SDCN-4 achieve consistent improvement over SDCN-1 in all the across. Besides, SDCN-4 performs better than other methods in all three datasets. Because the representations learned by each layer in the autoencoder are different, to preserve information as much as possible, we need to put all the representations learned from autoencoder into corresponding GCN layers.

\item There is an interesting phenomenon that the performance of SDCN-3 is not as good as SDCN-2 in all the datasets. The reason is that SDCN-3 uses the representation $\mathbf{H}^{(2)}$, which is a middle layer of the encoder. The representation generated by this layer is in the transitional stage from raw data to semantic representation, which inevitably loses some underlying information and lacks of semantic information. Another reason is that GCN with two layers does not cause serious over-smoothing problems, proved in \cite{li2018deeper}. For SDCN-3, due to the number of layers is not enough, the over-smoothing term in Eq. \ref{over-smoothing} is not small enough so that it is still plagued by the over-smoothing problems.
\end{itemize}

\begin{figure}[htbp]
\centering
\subfigure[USPS]{
\begin{minipage}[t]{0.3\linewidth}
\includegraphics[width=2.7cm]{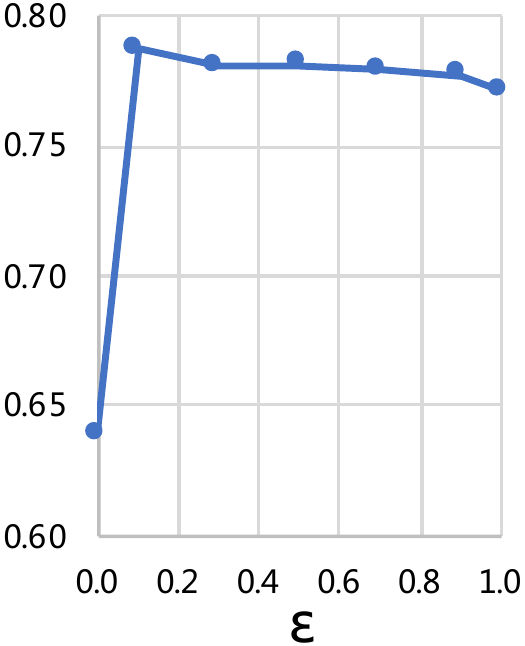}
\end{minipage}
}
\subfigure[HHAR]{
\begin{minipage}[t]{0.3\linewidth}
\includegraphics[width=2.7cm]{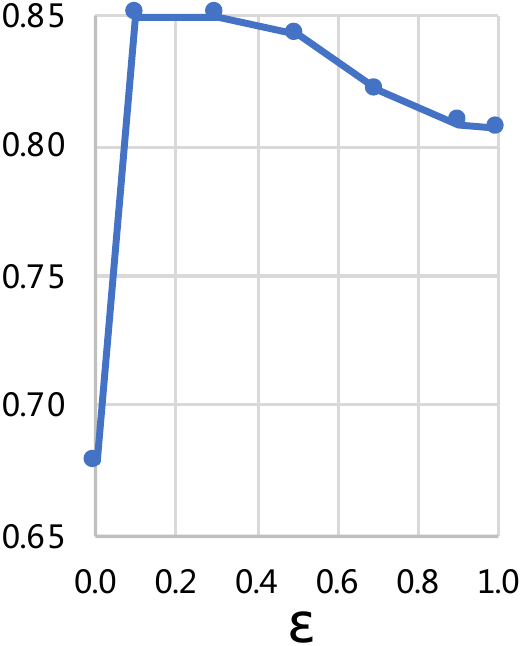}
\end{minipage}
}
\subfigure[Reuters]{
\begin{minipage}[t]{0.3\linewidth}
\includegraphics[width=2.7cm]{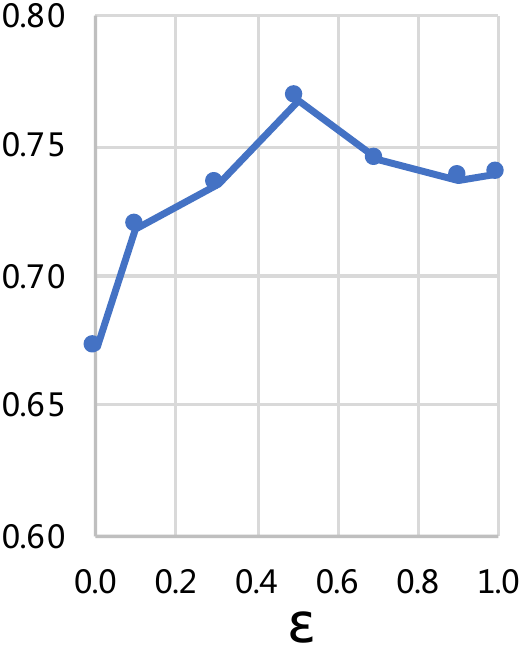}
\end{minipage}
}
\subfigure[ACM]{
\begin{minipage}[t]{0.3\linewidth}
\includegraphics[width=2.7cm]{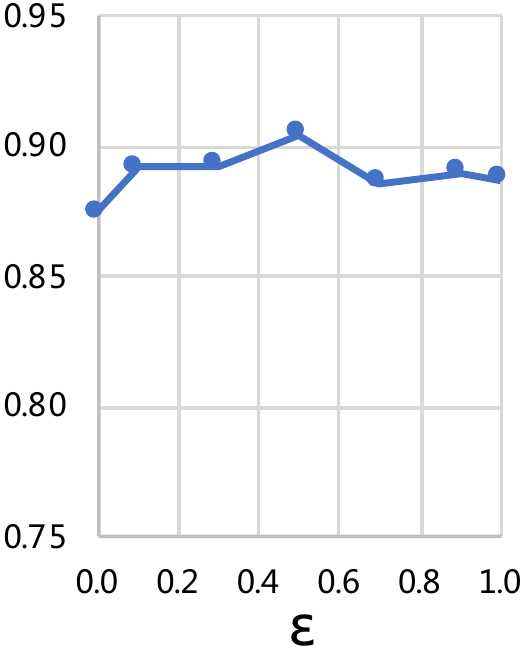}
\end{minipage}
}
\subfigure[DBLP]{
\begin{minipage}[t]{0.3\linewidth}
\includegraphics[width=2.7cm]{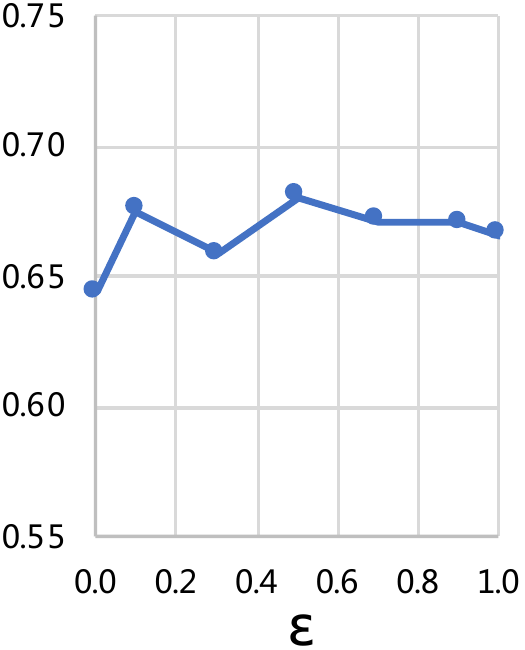}
\end{minipage}
}
\subfigure[Citeseer]{
\begin{minipage}[t]{0.3\linewidth}
\includegraphics[width=2.7cm]{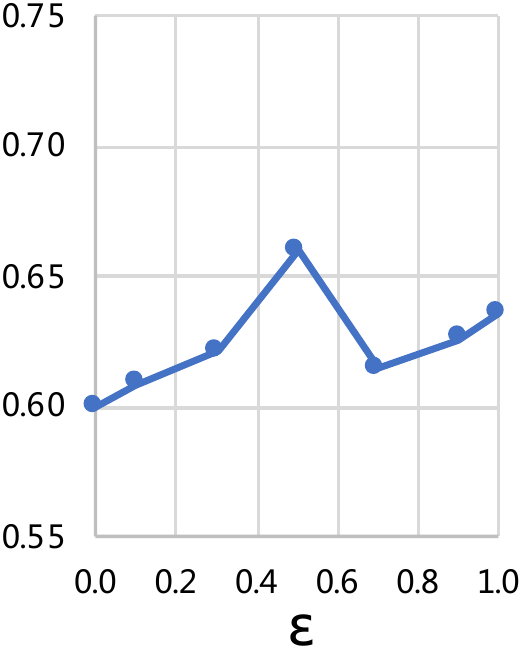}
\end{minipage}
}
\caption{Clustering accuracy with different $\epsilon$}
\end{figure}

\subsection{Analysis of balance coefficient $\epsilon$}
In previous experiments, in order to reduce hyperparameter search, we uniformly set the balance coefficient $\epsilon$ to 0.5. In this experiment, we will explore how SDCN is affected by different $\epsilon$ on different datasets. In detail, we set $\epsilon=\left\{0.0, 0.1, 0.3, 0.5, 0.7, 0.9, 1.0\right\}$. Note that $\epsilon=0.0$ means the representations in GCN module do not contain the representation from autoencoder and $\epsilon=1.0$ represents that GCN only use the representation $\mathbf{H}^{(L)}$ learned by DNN. From Figure 4, we can find:

\begin{itemize}
\item Clustering accuracy with parameter $\epsilon=0.5$ in four datasets (Reuters, ACM, DBLP, Citeseer) achieve the best performance, which shows that the representations of GCN module and DNN module are equally important and the improvement of SDCN depends on the mutual enhancement of the two modules.

\item Clustering accuracy with parameter $\epsilon=0.0$ in all datasets performs the worst. Clearly, when $\epsilon=0.0$, the GCN module is equivalent to the standard multilayer GCN, which will produce very serious over-smoothing problem \cite{li2018deeper}, leading to the decline of the clustering quality. 
Compared with the accuracy when $\epsilon=0.1$, we can find that even injecting a small amount of representations learned by autoencoder into GCN can help alleviate the over-smoothing problem.

\item Another interesting observation is that SDCN with parameter $\epsilon=1.0$ still gets a higher clustering accuracy. The reason is that although SDCN with parameter $\epsilon=1.0$ only use the representation $\mathbf{H}^{(L)}$, it contains the most important information of the raw data. After passing through one GCN layer, it can still achieve some structural information to improve clustering performance. However, due to the limitation of the number of layers, the results are not the best.
\end{itemize}

\begin{figure*}[ht]
\centering
\subfigure[USPS]{
\includegraphics[width=4cm]{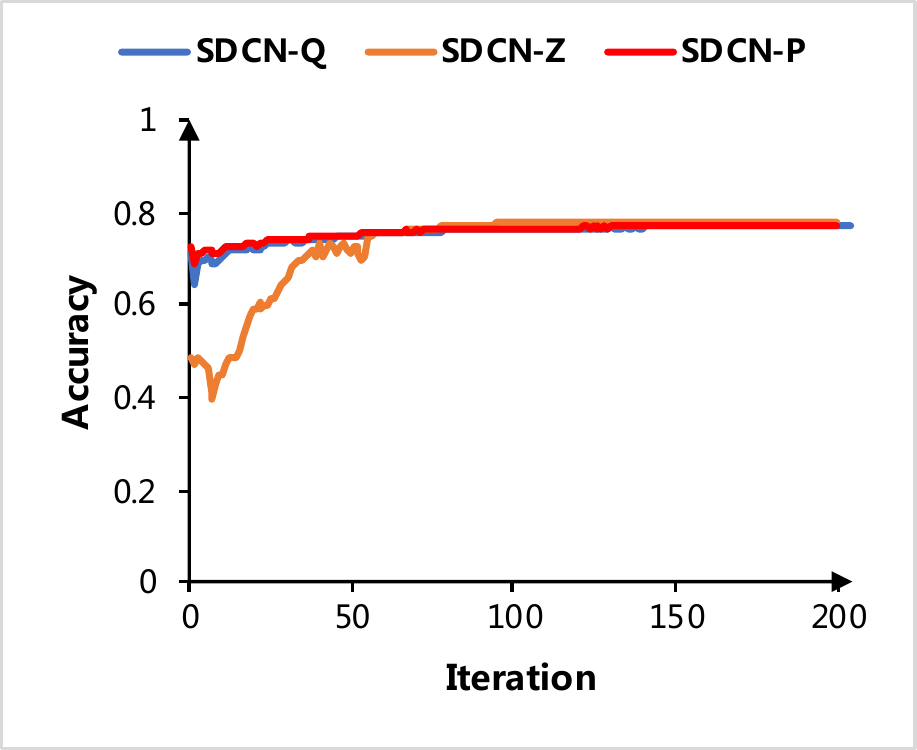}
}
\subfigure[HHAR]{
\includegraphics[width=4cm]{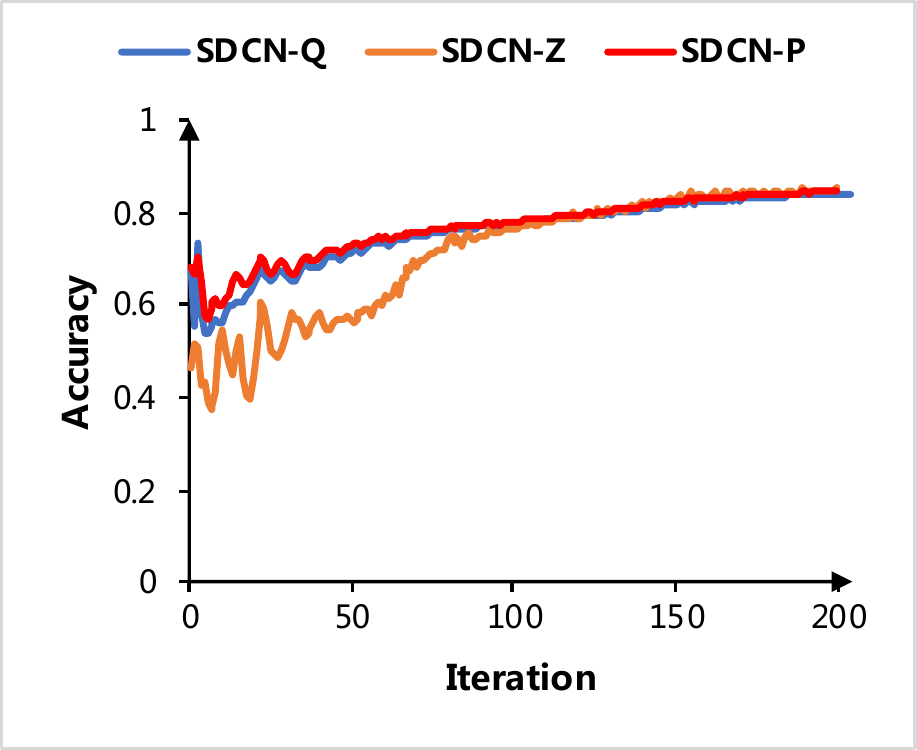}
}
\subfigure[ACM]{
\includegraphics[width=4cm]{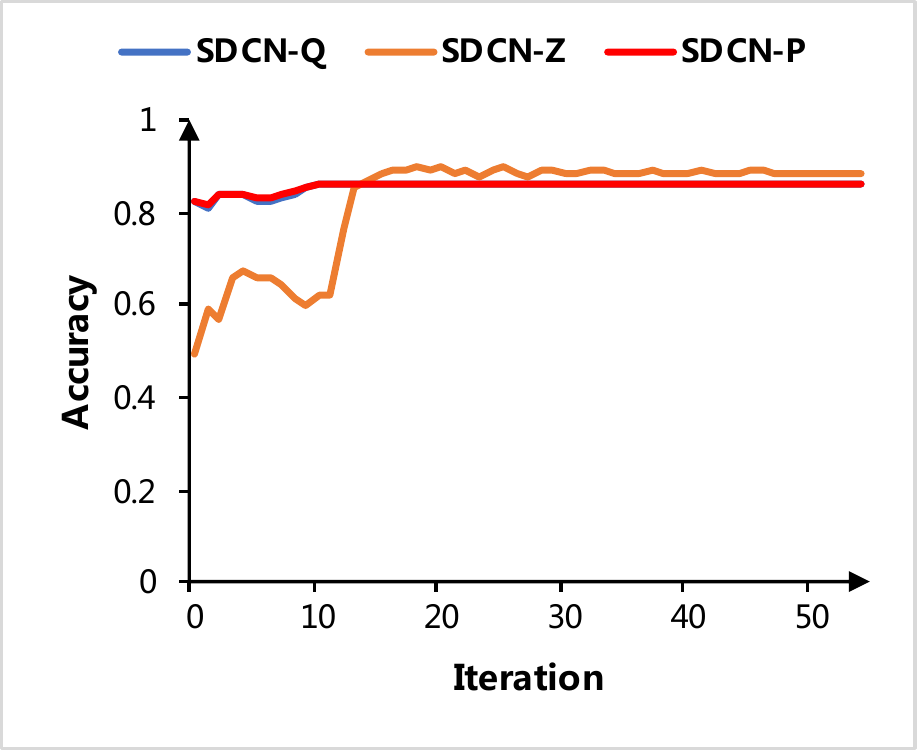}
}
\subfigure[DBLP]{
\includegraphics[width=4cm]{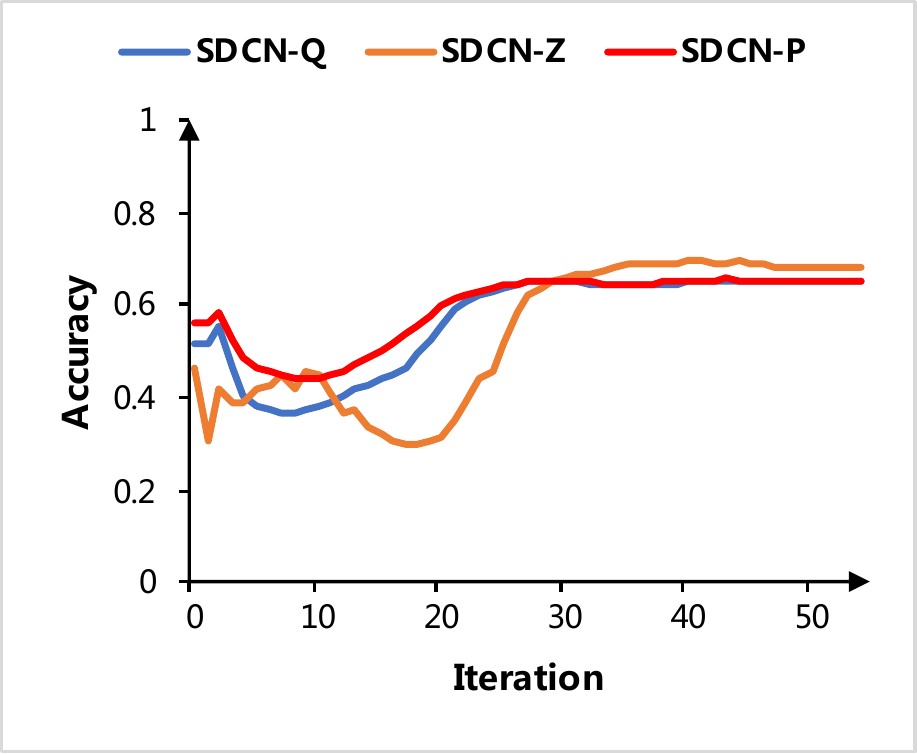}
}
\caption{Training process on different datasets}
\label{trainprocess}
\end{figure*}

\begin{figure}
\centering
\subfigure[Accuracy on USPS]{
\begin{minipage}[t]{0.3\linewidth}
\centering
\includegraphics[width=2.7cm]{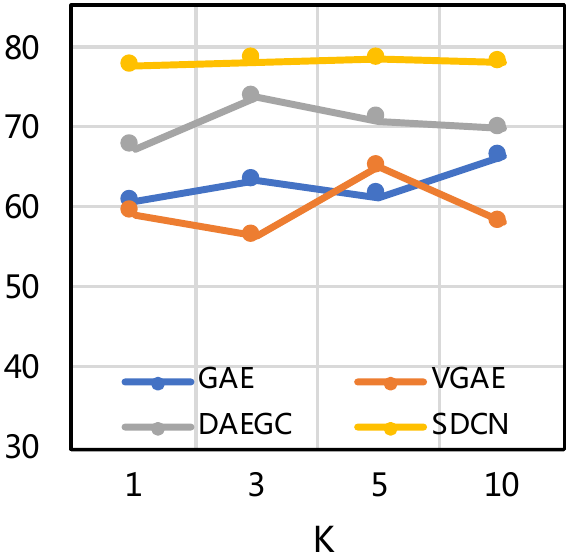}
\end{minipage}
}
\subfigure[Accuracy on HHAR]{
\begin{minipage}[t]{0.3\linewidth}
\centering
\includegraphics[width=2.7cm]{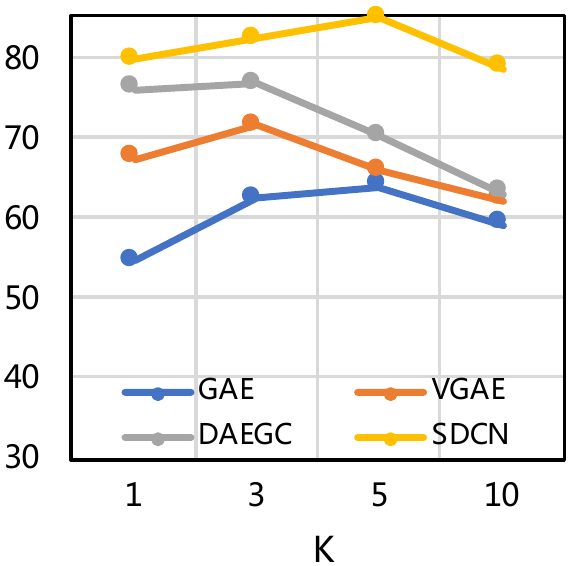}
\end{minipage}
}
\subfigure[Accuracy on Reuters]{
\begin{minipage}[t]{0.3\linewidth}
\centering
\includegraphics[width=2.7cm]{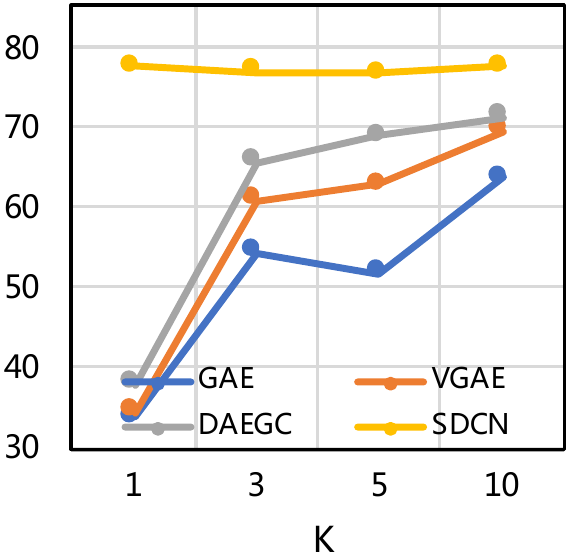}
\end{minipage}
}
\quad
\subfigure[NMI on USPS]{
\begin{minipage}[t]{0.3\linewidth}
\centering
\includegraphics[width=2.7cm]{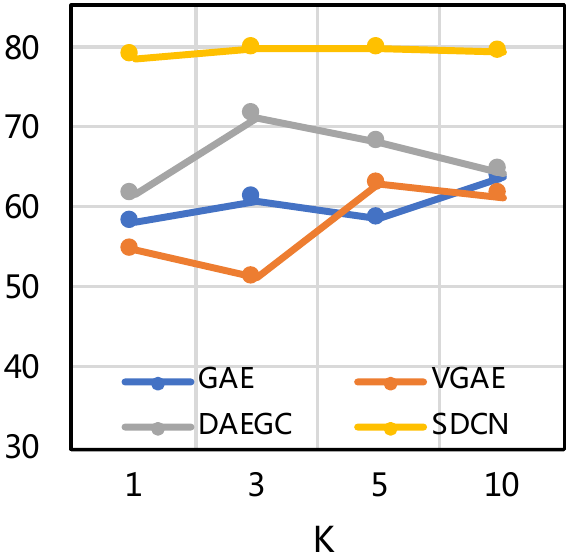}
\end{minipage}
}
\subfigure[NMI on HHAR]{
\begin{minipage}[t]{0.3\linewidth}
\centering
\includegraphics[width=2.7cm]{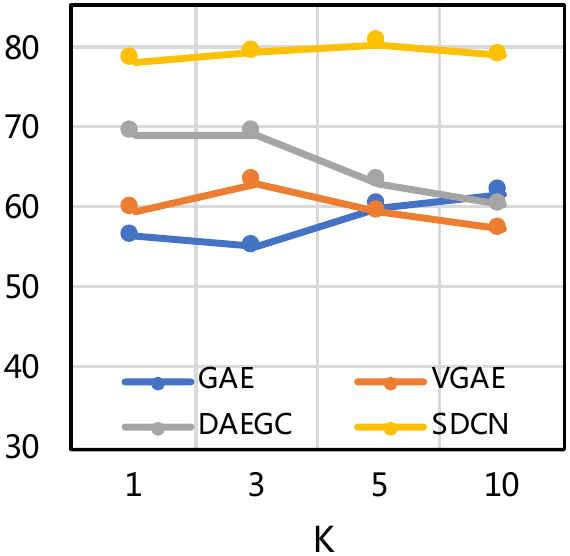}
\end{minipage}
}
\subfigure[NMI on Reuters]{
\begin{minipage}[t]{0.3\linewidth}
\centering
\includegraphics[width=2.7cm]{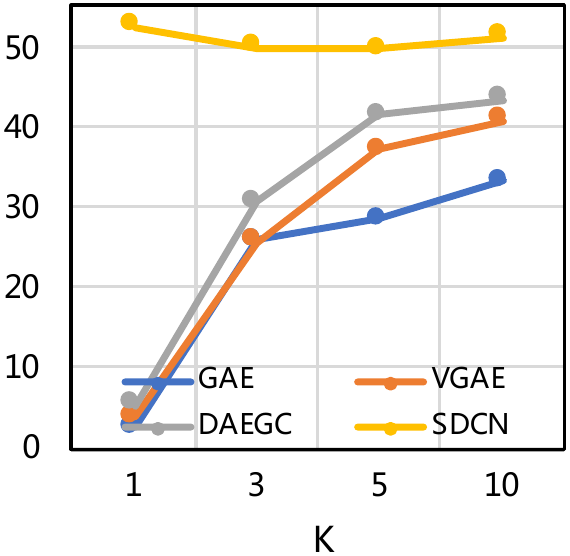}
\end{minipage}
}
\caption{Clustering results with different K}
\label{ksen}
\end{figure}

\subsection{$K$-sensitivity Analysis}
Since the number of the nearest neighbors $K$ is an important parameter in the construction of the KNN graph, we design a $K$-sensitivity experiment on the datasets with KNN graph. This experiment is mainly to prove that our model is $K$-insensitive. Hence we compare SDCN with the clustering methods focusing on the graph data (GAE, VGAE, DAEGC). From Figure \ref{ksen}, we can find that with $K$=$\left\{1,3,5,10\right\}$, our proposed SDCN is much better than GAE, VGAE and DAEGC, which proves that our method can learn useful structural information even in the graphs containing noise. Another finding is that these four methods can achieve good performance when K = 3 or K = 5, but in the case of K = 1 and K = 10, the performance will drop significantly. The reason is that when K = 1, the KNN graph contains less structural information and when K = 10, the communities in KNN graph are over-lapping. In summary, SDCN can achieve stable results compared with other baseline methods on the KNN graphs with different number of nearest neighbors.

\subsection{Analysis of Training Process}
In this section, we analyze the training progress in different datasets. Specifically, we want to explore how the cluster accuracy of the three sample assignments distributions in SDCN varies with the number of iterations. In Figure \ref{trainprocess}, the red line SDCN-P, the blue line SDCN-Q and the orange line SDCN-Z represent the accuracy of the target distribution $P$, distribution $Q$ and distribution $Z$, respectively. 
In most cases, the accuracy of SDCN-P is higher than that of SDCN-Q, which shows that the target distribution $P$ is able to guide the update of the whole model. At the beginning, the results of the accuracy of three distributions all decrease in different ranges. Because the information learned by autoencoder and GCN is different, it may rise a conflict between the results of the two modules, making the clustering results decline. Then the accuracy of SDCN-Q and SDCN-Z quickly increase to a high level, because the target distribution SDCN-P eases the conflict between the two modules, making their results tend to be consistent. In addition, we can see that with the increase of training epochs, the clustering results of SDCN tend to be stable and there is no significant fluctuation, indicating the good robustness of our proposed model.

\section{CONCLUSION}
In this paper, we make the first attempt to integrate the structural information into deep clustering. We propose a novel structural deep clustering network, consisting of DNN module, GCN module, and dual self-supervised module. Our model is able to effectively combine the autoencoder-spectific representation with GCN-spectific representation by a delivery operator. Theoretical analysis is provided to demonstrate the strength of the delivery operator. We show that our proposed model consistently outperforms the state-of-the-art deep clustering methods in various open datasets.

\begin{acks}
This work is supported by the National Key Research and Development Program of China (2018YFB1402600) and the National Natural Science Foundation of China (No. 61772082, 61702296, 61806020, 61972442, U1936104). It is also supported by 2018 Tencent Marketing Solution Rhino-Bird Focused Research Program.
\end{acks}

\clearpage
\bibliographystyle{ACM-Reference-Format}
\bibliography{ref}

\end{document}